\def\U{\mathbf{U}}
\def\X{\mathbf{X}}
\def\E{\mathbf{E}}
\def\V{\mathbf{V}}
\def\A{\mathbf{A}}
\def\Z{\mathbf{Z}}
\def\M{\mathbf{M}}
\def\N{\mathbf{N}}
\def\R{\mathbf{R}}
\def\C{\mathbf{C}}
\def\D{\mathbf{D}}
\def\S{\mathbf{S}}
\def\0{\mathbf{0}}
\def\min{\mbox{min}}
\def\Sol{\{\S_{\bigcdot \bigcdot}^{(k)}\}_{k=1}^K}
\def\SolHat{\{\hat{\S}_{\bigcdot \bigcdot}^{(k)}\}_{k=1}^K}
\def\SolTil{\{\tilde{\S}_{\bigcdot \bigcdot}^{(k)}\}_{k=1}^K}
\def\fpen{f_{\text{pen}}}
\def\SolSpace{\mathbb{S}_{\hat{\X}}}
\def\Sk{\S_{\bigcdot \bigcdot}^{(k)}}
\def\hatSk{\hat{\S}_{\bigcdot \bigcdot}^{(k)}}
\def\tilSk{\tilde{\S}_{\bigcdot \bigcdot}^{(k)}}
\newcommand{\bC}{ {\bf C} }
\newcommand{\bR}{ {\bf R} }
\newcommand{\bS}{ {\bf S} }
\newcommand{\bzero}{ {\bf 0} }
\newcommand*\bigcdot{\mathpalette\bigcdot@{.5}}
\newcommand*\bigcdot@[2]{\mathbin{\vcenter{\hbox{\scalebox{#2}{$\m@th#1\bullet$}}}}}
\newtheorem{prop}{Proposition} 
\newtheorem{theorem}{Theorem}
\newtheorem{lemma}{Lemma}
\begin{document}

\begin{frontmatter}
\title{Bidimensional linked matrix factorization for pan-omics pan-cancer analysis}
\runtitle{BIDIFAC+ for pan-omics pan-cancer analysis}


\begin{aug}
\author[A]{\fnms{Eric F.} \snm{Lock}\ead[label=e1]{elock@umn.edu}},
\author[B]{\fnms{Jun Young} \snm{Park}\ead[label=e2]{junjy.park@utoronto.ca}}
\and
\author[C]{\fnms{Katherine A.} \snm{Hoadley}\ead[label=e3]{hoadley@med.unc.edu}}
\address[A]{Division of Biostatistics, School of Public Health, University of Minnesota, 
}

\address[B]{Department of Statistical Sciences, Faculty of Arts \& Science, University of Toronto,
}

\address[C]{Department of Genetics, Computational Medicine Program, University of North Carolina,
}
\end{aug}

\begin{abstract}
Several modern applications require the integration of multiple large data matrices that have shared rows and/or columns.  
For example, cancer studies that integrate multiple omics platforms across multiple types of cancer, \emph{pan-omics pan-cancer analysis}, have extended our knowledge of molecular heterogeneity beyond what was observed in single tumor and single platform studies. However, these studies have been limited by available statistical methodology. We propose a flexible approach to the simultaneous factorization and decomposition of variation across such \emph{bidimensionally linked} matrices, BIDIFAC+.  BIDIFAC+ decomposes variation into a series of low-rank components that may be shared across any number of row sets (e.g., omics platforms) or column sets (e.g., cancer types).  This builds on a growing literature for the factorization and decomposition of linked matrices, which has primarily focused on multiple matrices that are linked in one dimension (rows or columns) only.  Our objective function extends nuclear norm penalization, is motivated by random matrix theory, gives a unique decomposition under relatively mild conditions, and can be shown to give the mode of a Bayesian posterior distribution.   We apply BIDIFAC+ to pan-omics pan-cancer data from TCGA, identifying shared and specific modes of variability across $4$ different omics platforms and $29$ different cancer types.  
\end{abstract}

\begin{keyword}
\kwd{Cancer genomics}
\kwd{data integration}
\kwd{low-rank matrix factorization}
\kwd{missing data imputation}
\kwd{nuclear norm penalization}
\end{keyword}
\end{frontmatter}

\section{Introduction}
\label{intro}

Data collection and curation for the Cancer Genome Atlas (TCGA) program completed in 2018, providing a unique and valuable public resource for comprehensive studies of molecular profiles across several types of cancer \citep{hutter2018cancer}.  The database includes  information from several molecular platforms for over $10,000$ tumor samples from individuals representing $33$ types of cancer.  The molecular platforms capture signal at different 'omics levels (e.g., the genome, epigenome, transcriptome and proteome), which are biologically related and can each influence the behavior of the tumor.  Thus, when studying molecular signals in cancer it is often necessary to consider data from multiple omics sources at once.  This and other applications have motivated a very active research area in statistical methods for multi-omics integration.  

A common task in multi-omics applications is to jointly characterize the molecular heterogeneity of the samples.  Several multi-omics methods have been developed for this purpose, which can be broadly categorized by (1) clustering methods that identify molecularly distinct subtypes of the samples \citep{huo2017integrative, lock2013bayesian,gabasova2017clusternomics}, (2) factorization methods that identify continuous lower-dimensional patterns of molecular variability \citep{lock2013joint, argelaguet2018multi,gaynanova2017structural}, or methods that combine aspects of (1) and (2) \citep{shen2013sparse, mo2017fully,hellton2016integrative}.  These extend classical approaches, such as (1) k-means clustering and (2) principal components analysis, to the multi-omics context, allowing the exploration of heterogeneity that is shared across the different 'omics sources while accounting for their differences.

Several multi-omics analyses have been performed on the TCGA data, including flagship publications for each type of cancer (e.g., see \cite{cancer2012comprehensive,cancer2014comprehensive,verhaak2010integrated}).  These have revealed striking molecular heterogeneity within    each classical type of cancer, which is often clinically relevant.  However,  restricting an analysis to a particular type of cancer sacrifices power to detect important genomic changes that are present across more than one cancer type. In 2013 TCGA began the Pan-Cancer Analysis Project, motivated by the observation that ``cancers of disparate organs reveal many shared features, and, conversely, cancers from the same organ are often quite distinct" \citep{weinstein2013cancer}.  Subsequently, several pan-cancer studies have identified important shared molecular alterations for somatic mutations \citep{kandoth2013mutational}, copy number \citep{zack2013pan}, mRNA \citep{hoadley2014multiplatform}, and protein abundance \citep{akbani2014pan}.  However, a multi-omics analysis found that pan-cancer molecular heterogeneity is largely dominated by cell-of-origin and other factors that define the classical cancer types \citep{hoadley2018cell}.

In this study we do not focus on baseline molecular differences between the cancer types.  Rather, we focus on whether patterns of variability within each cancer type are shared across cancer types, i.e., whether multi-omic molecular profiles that drive heterogeneity in one type of cancer also drive heterogeneity in other cancers.  Systematic investigations of heterogeneity in a pan-omics and pan-cancer context are presently limited by a lack of principled and computationally feasible statistical approaches for the comprehensive analysis of such data.  In particular, the data take the form of \emph{bidimensionally linked matrices}, i.e., multiple large matrices that may share row sets (here, defined by the omics platforms) or column sets (here, defined by the cancer types); this is illustrated in Figure~\ref{fig:bididiag} and the formal framework is described in Section~\ref{framework}.  Such bidimensional integration problems are increasingly encountered in practice, particularly for biomedical applications that involve multiple omics platforms and multiple sample cohorts that may correspond to different studies, demographic strata, species, diseases or disease subtypes.      

In this article we propose a flexible approach to the simultaneous factorization and decomposition of variation across bidimensionally linked matrices, BIDIFAC+.  Our approach builds on a growing literature for the factorization and decomposition of linked matrices, which we review in Section~\ref{existing}.  However, previous methods have focused on multiple matrices that are linked in just one dimension (rows or columns), or assume that shared signals must be present across all row sets or column sets.  This is limiting for pan-cancer analysis and other applications, where we expect patterns of variation that are shared across some, but not necessarily all, cancer types and omics platforms.   With this motivation, our proposed approach decomposes variation into a series of low-rank components that may be shared across any number of row sets (e.g., omics platforms) or column sets (e.g., cancer types).  We develop a new approach to model selection, and new estimating algorithms, to accommodate this more flexible decomposition. We establish theoretical results, most notably concerning the uniqueness of the decomposition without orthogonality constraints, which are entirely new for linked matrix decompositions.  Moreover, we show how BIDIFAC+ can improve the estimation of underlying structures over existing methods even in the more familiar context for which matrices are linked in just one dimension.        

\begin{figure}[!h]
\includegraphics[scale=0.58]{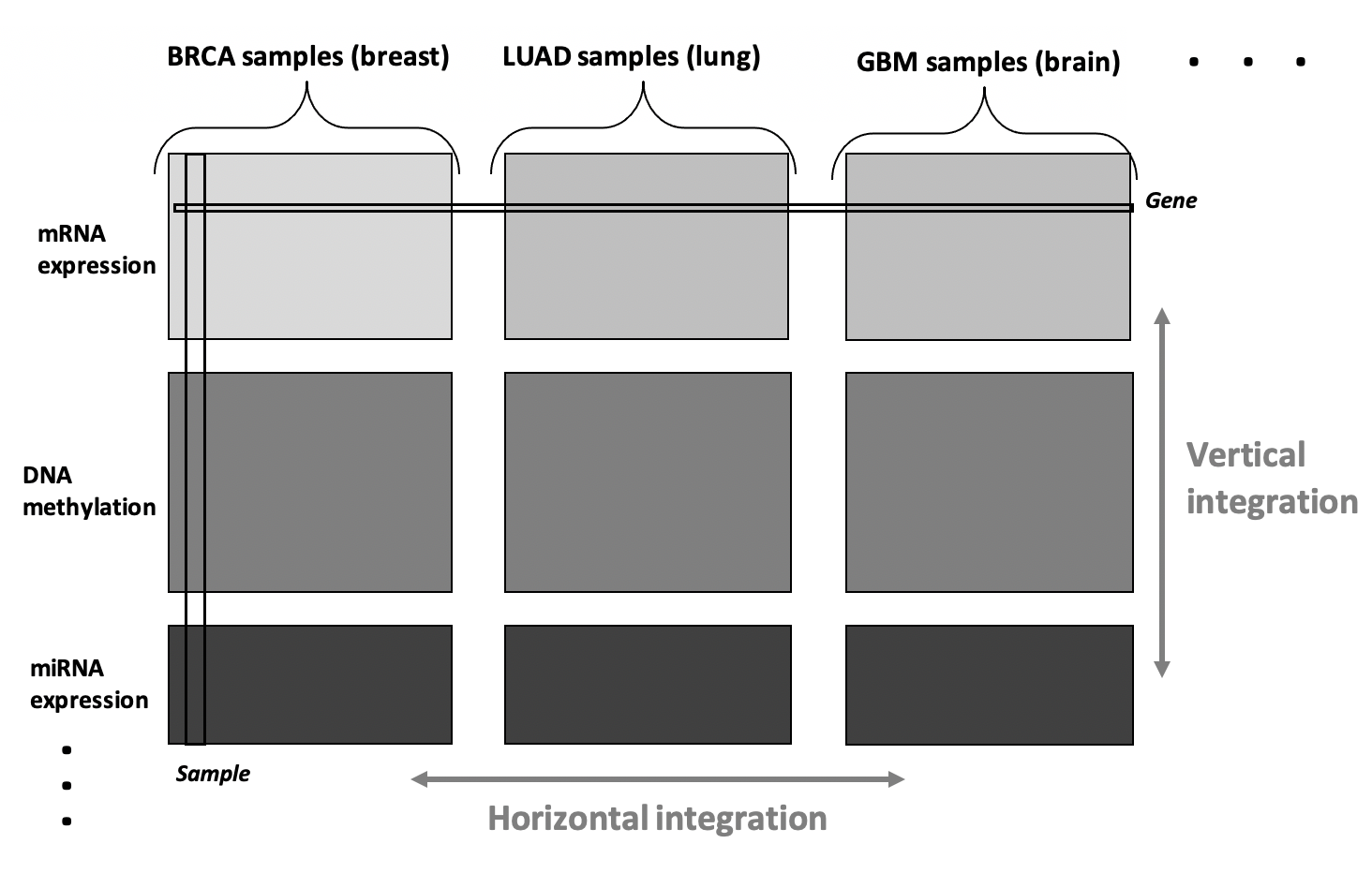}
\label{fig:bididiag}
\caption{Bidimensional integration of pan-omics pan-cancer data.}
\end{figure}

\section{Formal framework and notation}
\label{framework}

Here we introduce our framework and notation for pan-omics pan-cancer data.  Let $\X_{ij}: M_i \times N_j$ denote the data matrix for omics data source $i$ and sample set (i.e., cancer type) $j$ for $j=1,\hdots,J$ and $i=1,\hdots,I$. Columns of $\X_{ij}$ represent samples, and rows represent variables (e.g., genes, miRNAs, proteins).  The sample sets of size $\N=[N_1,\hdots,N_J]$ are consistent across each omics source, and the features measured for each omics source $\M = [M_1,\hdots,M_I]$ are consistent across sample sets. 
As illustrated in Figure~\ref{fig:bididiag}, the collection of available data can be represented as a single data matrix $\X_{\bigcdot \bigcdot}:M \times N$ where $M= M_1+\hdots+M_I$ and $N = N_1+\hdots+N_J$, by horizontally and vertically concatenating its constituent blocks:   
\begin{align} \X_{\bigcdot \bigcdot} = \left [ \begin{array}{c c c c} \X_{11} & \X_{12} & \hdots & \X_{1J}  \\  \vdots & \vdots & \ddots & \vdots \\ \X_{I1} & \X_{I2} & \hdots & \X_{IJ} \end{array} \right ] \; \text{  where  } \X_{ij} \text{  are  } M_i \times N_j. \label{bimat}\end{align}
Similarly, $\X_{i \bigcdot}$ defines the concatenation of omics source $i$ across cancer types and $\X_{\bigcdot j}$ defines the concatenation of cancer type $j$ across omics sources:
\begin{align*} \X_{i \bigcdot} = \left[\X_{i1} \; \hdots \; \X_{iJ} \right]  \text{  ,  } \; \X_{\bigcdot j} = \left[ \X_{1j}'  \; \hdots \;  \X_{Ij}' \right]' . \end{align*}
The notation $\X_{ij}[\bigcdot,n]$ defines the n'th column of matrix $ij$, $\X_{ij}[m,\bigcdot]$ defines the m'th row, and $\X_{ij}[m,n]$ defines the entry in row $m$ and column $n$.  In our context, the entries are all quantitative, continuous measurements; missing data are addressed in Section~\ref{missing}.

We will investigate shared or unique patterns of systematic variability (i.e., heterogeneity) among the constituent data blocks. We are not interested in baseline differences between the different omics platforms or sample sets,  and so after routine preprocessing the data will be centered so that the mean of the entries within each data block, $\X_{ij}$, is $0$.  Moreover, to resolve the disparate scale of the data blocks, each block will be scaled to have comparable variability as described in Section~\ref{scaling}.

In what follows, $||\A||_F$ denotes the Frobenius norm for any given matrix, so that $||\A||_F^2$ is the sum of squared entries in $\A$.  The operator $||\A||_*$ denotes the nuclear norm of $\A$, which is given by the sum of the singular values in $\A$; that is, if $\A: M \times N$ has ordered singular values $\D[1,1], \D[2,2], \hdots$, then $||\A||_* = \sum_{r=1}^{\mbox{min}(M,N)} \D[r,r]$.

\section{Existing integrative factorization methods}
\label{existing}

There is now an extensive literature on the integrative factorization and decomposition of multiple linked datasets that share a common dimension. Much of this methodology is motivated by multi-omics integration, i.e., \emph{vertical integration} of multiple matrices $\{\X_{11}, \X_{21},\hdots, \X_{I1}\}$ with shared columns in the setting of Section~\ref{framework}.  For example, the Joint and Individual Variation Explained (JIVE) method \citep{lock2013joint, oconnell2016} decomposes variation into \emph{joint} components that are shared among multiple omics platforms and \emph{individual} components that only explain substantial variability in one platform.  This distinction simplifies interpretation, and also improves accuracy in recovering underlying signals. Accuracy improves because structured individual variation can interfere with finding important joint signal, just as joint structure can obscure important signal that is individual to a data source.  The factorized JIVE decomposition is 
\begin{align} \label{modelPCA} \X_{i1} = \U_{i} \V^T+\U^*_{i} \V_{i}^T + \E_{i} \; \; \text{ for } i=1,\hdots,I. \end{align} 
Joint structure is represented by the common score matrix $\V: N_1 \times R$,  which summarize patterns in the samples that explain variability across multiple omics platforms. The loading matrices $\U_{i}: M_i \times R$ indicate how these joint scores are expressed in the rows (variables) of platform $i$.  The score matrices $\V_{i}: N_1 \times R_i$ summarize sample patterns specific to platform $i$, with loadings $\U^*_{i}$.   Model~\eqref{modelPCA} can be equivalently represented as a sum of low-rank matrices
\begin{align} \label{modelRank} \X_{\bigcdot 1} = \S_{\bigcdot}^{(0)}+\sum_{i=1}^I \S_{\bigcdot}^{(i)} + \E_{\bigcdot} \end{align} 
where $\S_{\bigcdot}^{(0)} = \U_{\bigcdot} \V^T$ is of rank $R$ and $\S_{\bigcdot}^{(i)}=[\S_{1}^{(i)'} \; \hdots \; \S_{I}^{(i)'}]'$ is the matrix of rank $R_i$ given by the individual structure for platform $i$ and zeros elsewhere:
\[ \S_{i'}^{(i)} = \begin{cases}
 	\0_{M_{i'} \times N} \; \text{if } \; i' \neq i\\
 	\U_{i'}^{*} \V_i^T \; \text{ if }\;  i' = i.
 \end{cases}\]
Several other methods result in a factorized decomposition similar to that in \eqref{modelPCA} and \eqref{modelRank}, including approaches that allow for different distributional assumptions on the constituent matrices \citep{li2018general, zhu2018generalized}, non-negative factorization \citep{yang2015non}, and the incorporation of covariates \citep{li2017incorporating}.  The Structural Learning and Integrative Decomposition (SLIDE) method \citep{gaynanova2017structural} allows for a more flexible decomposition in which some components are only partially shared across a subset of the constituent data matrices.  SLIDE extends model~\eqref{modelRank} to the more general decomposition 
\begin{align} \label{slide} \X_{\bigcdot 1} = \sum_{k=1}^K \S_{\bigcdot}^{(k)} + \E_{\bigcdot} \end{align} 
where $\S_{\bigcdot}^{(k)} = [\S_{1}^{(k)'} \; \hdots \; \S_{I}^{(k)'}]'$ is a low-rank matrix with non-zero values for some subset of the sources that is identified by a binary matrix $\R: I \times K$
and \[ \S_i^{(k)} = \begin{cases}
 	\0_{M_i \times N} & \text{ if } \R[i,k]=0 \\ 
 	\U_{i}^{(k)} \V^{(k) T} & \text{ if } \R[i,k]=1
 \end{cases}.\]
 Here, $\V^{(k)}$ gives scores that explain variability for only those patterns for the omics sources identified by $\R[\bigcdot,k]$.  
 
 The BIDIFAC approach \citep{park2019integrative} is designed for the decomposition of bidimensionally linked matrices as in~\eqref{bimat}.  Its low-rank factorization can be viewed as an extension of that for JIVE, decomposing variation into structure that is shared globally ($\text{G}$), across rows ($\text{Row}$), across columns ($\text{Col}$), or individual to the constituent matrices ($\text{Ind}$).  Following \eqref{modelRank} for JIVE and \eqref{slide} for SLIDE, its full decomposition can be expressed as
        \begin{align} \label{modelbidi} \X_{\bigcdot \bigcdot} = \S_{\bigcdot \bigcdot}^{(\text{G})}+\sum_{i=1}^I \S_{\bigcdot \bigcdot}^{(i, \text{Row})} + \sum_{j=1}^J \S_{\bigcdot \bigcdot}^{(j, \text{Col})}+\sum_{i=1}^I \sum_{j=1}^J \S_{\bigcdot \bigcdot}^{(i, j, \text{Ind})}+\E_{\bigcdot \bigcdot} \end{align}
        where $\S_{i j}^{(\text{G})}=\U_{i}^{(\text{G})} \V_{j}^{(\text{G}) T}$,\\  $\S_{i' j'}^{(i,\text{Row})} = \begin{cases}
 	\0_{M_i \times N_j} \; &\text{ if } i' \neq i \\
 	\U_{i}^{(i,\text{Row})} \V_{j}^{(i,\text{Row}) T} \; &\text{ if } i' = i
 \end{cases}$, $\; \; \S_{i' j'}^{(j,\text{Col})} = \begin{cases}
 	\0_{M_i \times N_j} \; &\text{ if } j' \neq j \\
 	\U_{i}^{(j,\text{Col})} \V_{j}^{(j,\text{Col}) T} \; &\text{ if } j' = j
 \end{cases}$,\\ and       
       \[ \S_{i' j'}^{(i,j,\text{Ind})} = \begin{cases}
 	\0_{M_i \times N_j} \; &\text{ if } i' \neq i \text{ or } j' \neq j  \\
 	\U_{i}^{(i,j,\text{Ind})} \V_{j}^{(i,j,\text{Ind}) T} \; &\text{ if } i' = i \text{ and } j' = j
 \end{cases}. \]

\section{Proposed model}
\label{model}
We consider a flexible factorization  of bidimensionally linked data that combines aspects of the BIDIFAC and SLIDE models.  Our full decomposition can be expressed as 
  \begin{align} \label{bidiflex} \X_{\bigcdot \bigcdot} = \sum_{k=1}^K \Sk + \E_{\bigcdot \bigcdot}, \end{align}  
  where \begin{align*} \S^{(k)}_{\bigcdot \bigcdot} = \left [ \begin{array}{c c c c} \S^{(k)}_{11} & \S^{(k)}_{12} & \hdots & \S^{(k)}_{1J}  \\  \vdots & \vdots & \ddots & \vdots \\ \S^{(k)}_{I1} & \S^{(k)}_{I2} & \hdots & \S^{(k)}_{IJ} \end{array} \right ] \end{align*}
  and the presence of each $\S^{(k)}_{ij}$  is determined by a binary matrix of row indicators $\R: I \times K$ and column indicators $\C: J \times K$:
  \[\S^{(k)}_{ij} = \begin{cases}
 	\0_{M_i \times N_j} \; & \text{ if } \R[i,k] = 0 \text{ or } \C[j,k] = 0\\
 	\U_{i}^{(k)} \V_{j}^{(k) T} & \text{ if } \R[i,k] = 1 \text{ and } \C[j,k] = 1 \end{cases}. \] 
 	Each $\S^{(k)}_{\bigcdot \bigcdot}$ gives a low-rank \emph{module} that explains variability within the omics platforms identified by $\R[\bigcdot,k]$ and the cancer types identified by $\C[\bigcdot,k]$.  By requiring $\R[i,k] = 1$ \emph{and} $\C[j,k] = 1$, the module is non-zero on a contiguous submatrix.   There are in total $(2^I-1)(2^J-1)$ such submatrices, so by default we set $K=(2^I-1)(2^J-1)$ and let $\R$ and $\C$ enumerate all possible modules (see Appendix A).   The SLIDE decomposition \eqref{slide} is a special case when $J=1$ or $I=1$ (i.e., unidimensional integration); the BIDIFAC model \eqref{modelbidi} is a special case where each column of $\R$ and $\C$ contains either entirely $1$'s (i.e., all rows or columns included) or just one $1$ (i.e., just one row set or column set included).   In practice, if the row and column set for a structural module is not included, it may be subsumed into a larger module or broken into separate smaller modules.  The matrix $\E_{\bigcdot \bigcdot}$ is an error matrix, whose entries are assumed to be sub-Gaussian with mean $0$ and variance 1 after scaling (see Section~\ref{scaling}.   

Let the rank of each module be rank$(\S^{(k)}_{\bigcdot \bigcdot})=R_k$, so that the dimensions of the non-zero components of the factorization are $\U_{i}^{(k)}: M_i \times R_k$ and $\V_{j}^{(k)}: N_j \times R_k$.  The $r$'th component of the $k$'th module gives a (potentially multi-omic) molecular profile $\{\U_{i}^{(k)}[\bigcdot,r] : \R[i,k]=1\}$ that explains variability within those cancer types defined by $\C[\cdot,k]$ with corresponding sample scores $\{\V_{j}^{(k)}[r,\bigcdot] : \C[j,k]=1\}$.     

\section{Objective function}
\label{objective}
To estimate model~\eqref{bidiflex}, we minimize a least squares criterion with a structured nuclear norm penalty: 
\begin{align} \underset{\Sol}{\mbox{argmin}} \; \; \frac{1}{2}||\X_{\bigcdot \bigcdot}-\sum_{k=1}^K \Sk||^2_F + \sum_{k=1}^K \lambda_k ||\Sk||_*  \label{obj_eq} \end{align}
subject to $\S^{(k)}_{ij} = \0_{M_i \times N_j}$ if  $\R[i,k] = 0$ or $\C[j,k] = 0$.   
The choice of the penalty parameters $\{\lambda_k\}_{k=1}^K$ is critical, and must satisfy the conditions of Proposition~\ref{prop_nec} to allow for non-zero estimation of each module. 
\begin{prop}
\label{prop_nec}
Under objective~\eqref{obj_eq}, the following are necessary to allow for each $\hatSk$ to be non-zero 
\begin{enumerate}
\item If for $k'\neq k$ the rows and columns of module $k'$ are contained within those for module $k$, $\bR[i, k]-\bR[i, k'] \geq 0$ $\forall$ $i$ and $\bC[j, k]-\bC[j, k'] \geq 0$ $\forall$ $j$, then $\lambda_k>\lambda_{k'}$.
\item If $\mathcal{I}_k \subset \{1,\hdots,k-1,k+1,\hdots,K\}$ is any subset of modules that together cover the rows and columns of module $k$, $\sum_{j\in \mathcal{I}_k} \bR[\bigcdot,j]= r \cdot \bR[\bigcdot,k]$ and $\sum_{j\in \mathcal{I}_k} \bC[\bigcdot,j]= c \cdot \bC[\bigcdot,k]$ for positive integers $r$ and $c$, then $\lambda_k< \sum_{j\in \mathcal{I}_k} \lambda_{j}$.
\end{enumerate}
\end{prop}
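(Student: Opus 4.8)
The plan is to prove both conditions by contradiction, showing that if the stated strict inequality fails then there is an optimal solution in which the relevant module is exactly $\mathbf{0}$; hence the strict inequality is necessary to \emph{allow} a nonzero estimate. Two facts drive every step. First, the least-squares term in \eqref{obj_eq} depends on the modules only through their sum $\sum_{k}\Sk$, so any reallocation of structure among modules whose supports are compatible with the constraints leaves the fit unchanged and alters only the penalty. Second, the nuclear norm obeys the triangle inequality $||\A+\B||_*\le||\A||_*+||\B||_*$ and is nonincreasing under restriction to a submatrix: if $\A_S$ is obtained from $\A$ by selecting a subset of rows and columns then $||\A_S||_*\le||\A||_*$, since $\A_S=\bP_1\A\bP_2^T$ for coordinate projections $\bP_1,\bP_2$ of spectral norm at most one and $||\bP_1\A\bP_2^T||_*\le||\A||_*$. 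Throughout I write $\hatSk$ for a putative optimizer and construct a competitor of no greater objective value by moving structure between modules.

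For the first condition, suppose module $k'$ is contained in module $k$ ($\R[i,k]\ge\R[i,k']$ for all $i$ and $\C[j,k]\ge\C[j,k']$ for all $j$) and, toward a contradiction, $\lambda_k\le\lambda_{k'}$. Because the support of module $k'$ lies inside that of module $k$, the assignment $\tilSk=\hatSk+\hat{\S}_{\bigcdot\bigcdot}^{(k')}$, $\tilde{\S}_{\bigcdot\bigcdot}^{(k')}=\mathbf{0}$, with all other modules unchanged, is feasible and keeps $\sum_k\Sk$ fixed. The change in penalty is $\lambda_k||\hatSk+\hat{\S}_{\bigcdot\bigcdot}^{(k')}||_*-\lambda_k||\hatSk||_*-\lambda_{k'}||\hat{\S}_{\bigcdot\bigcdot}^{(k')}||_*\le(\lambda_k-\lambda_{k'})||\hat{\S}_{\bigcdot\bigcdot}^{(k')}||_*\le 0$ by the triangle inequality, with strict decrease when $\lambda_k<\lambda_{k'}$ and $\hat{\S}_{\bigcdot\bigcdot}^{(k')}\ne\mathbf{0}$. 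Thus whenever $\lambda_k\le\lambda_{k'}$ there is an optimal solution with $\hat{\S}_{\bigcdot\bigcdot}^{(k')}=\mathbf{0}$, so $\lambda_k>\lambda_{k'}$ is necessary.

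For the second condition, suppose $\mathcal{I}_k$ covers module $k$ with row multiplicity $r$ and column multiplicity $c$, and toward a contradiction $\lambda_k\ge\sum_{j\in\mathcal{I}_k}\lambda_j$. The aim is to rewrite $\hatSk=\sum_{j\in\mathcal{I}_k}\Z_j$, where each $\Z_j$ is supported on the block pattern of module $j$, then add $\Z_j$ to $\hat{\S}_{\bigcdot\bigcdot}^{(j)}$ while zeroing module $k$. The fit is unchanged, and by the triangle inequality the penalty changes by at most $\sum_{j\in\mathcal{I}_k}\lambda_j||\Z_j||_*-\lambda_k||\hatSk||_*$, so it suffices to construct the $\Z_j$ with $\sum_{j\in\mathcal{I}_k}\lambda_j||\Z_j||_*\le\lambda_k||\hatSk||_*$. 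The natural choice sets $\Z_j$ to a scalar multiple of the restriction of $\hatSk$ to the support of module $j$: if every entry of module $k$ is covered the same number $\kappa\ge 1$ of times by $\mathcal{I}_k$, then $\Z_j=\kappa^{-1}(\hatSk\text{ restricted to module }j)$ gives $\sum_j\Z_j=\hatSk$ and, by the restriction bound, $\sum_j\lambda_j||\Z_j||_*\le\kappa^{-1}\big(\sum_{j\in\mathcal{I}_k}\lambda_j\big)||\hatSk||_*\le\lambda_k||\hatSk||_*$. Hence module $k$ can be taken to be $\mathbf{0}$, and $\lambda_k<\sum_{j\in\mathcal{I}_k}\lambda_j$ is necessary.

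I expect the main obstacle to be the redistribution step for the second condition under a general covering, where the per-entry coverage need not be constant: the separate multiplicity conditions $\sum_{j}\R[\bigcdot,j]=r\,\R[\bigcdot,k]$ and $\sum_{j}\C[\bigcdot,j]=c\,\C[\bigcdot,k]$ control how often each row and each column is hit, but not how often a given entry is, so a single constant scaling need not produce a partition of unity over the entries. The plan is to handle this by choosing module-wise weights that do sum to one on every entry (which is available for the product-type coverings arising from the default enumeration, where the coverage factors as a row count times a column count) and to bound each $||\Z_j||_*$ via the restriction inequality; verifying that such weights exist and keep the total weighted nuclear norm at most $\lambda_k||\hatSk||_*$ for every admissible $\mathcal{I}_k$ is the delicate part. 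A secondary point is strictness: when the failed inequality holds with equality the reallocation is only nonincreasing, so one concludes merely that \emph{some} optimizer has the module equal to $\mathbf{0}$, which is exactly what ``necessary to allow for a nonzero estimate'' requires.
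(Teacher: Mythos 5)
Your strategy is the same as the paper's: in both conditions you zero out the module in question and reallocate its content to other modules, using the triangle inequality for the nuclear norm together with the fact that restriction to a subset of rows and columns cannot increase the nuclear norm. Your argument for condition 1 is exactly the paper's argument, and is in fact stated more correctly: the paper's write-up says the combined matrix $\hat{\S}_{\bigcdot\bigcdot}^{(k)}+\hat{\S}_{\bigcdot\bigcdot}^{(k')}$ is placed in the \emph{smaller} module $k'$ (which would violate the support constraints), but its algebra penalizes that matrix by $\lambda_k$, i.e., it performs your move and should conclude that module $k'$ can be zeroed. Your argument for condition 2 under constant entrywise coverage $\kappa$ is also precisely the paper's argument: the paper simply asserts the identity $\hatSk=\sum_{j\in\mathcal{I}_k}\hat{\S}_{\bigcdot\bigcdot}^{(j)\prime}$, where $\hat{\S}_{\bigcdot\bigcdot}^{(j)\prime}$ is the restriction of $\hatSk$ to module $j$'s support; that identity is valid exactly when every block of module $k$ is covered once ($\kappa=1$), and the paper then runs the same chain of inequalities you do.

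The step you flag as delicate is a genuine gap in your write-up, but it is equally a gap in the paper's own proof, and your proposed repair cannot close it. Entrywise partition-of-unity weights need not exist: let module $k$ span a $3\times 3$ grid of blocks and let $\mathcal{I}_k$ consist of three modules $j_1,j_2,j_3$ with row and column sets $\{1,2\}$, $\{2,3\}$, and $\{1,3\}$ respectively, so each row and each column of module $k$ is covered $r=c=2$ times. Every off-diagonal block is covered by exactly one of the three modules, which forces $w_{j_1}=w_{j_2}=w_{j_3}=1$, while block $(1,1)$ is covered by $j_1$ and $j_3$ and requires $w_{j_1}+w_{j_3}=1$; the system is infeasible even if negative weights are allowed. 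Worse, the hypothesis of condition 2 does not even guarantee that every block of module $k$ is covered at all: with $I=J=2$, the two modules $\{1\}\times\{1\}$ and $\{2\}\times\{2\}$ cover the rows and columns of the full module with $r=c=1$ yet miss blocks $(1,2)$ and $(2,1)$ entirely. For that covering no reallocation of $\hatSk$ exists, and the stated conclusion itself fails: if the data are large and concentrated on the off-diagonal blocks, every optimizer has $\hatSk\neq\0$ no matter how $\lambda_k$ compares to the sum of the other two penalties. So condition 2 as literally stated is only correct under a stronger reading of ``cover'' --- uniform entrywise coverage of module $k$'s support (e.g., a partition) --- which is exactly the case you completed and the case the paper's proof implicitly assumes.
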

We determine the $\lambda_k$'s by random matrix theory, motivated by two well-known results for a single matrix that we repeat here in Propositions~\ref{prop1} and~\ref{prop2}.
\begin{prop}
\label{prop1} 
\citep{mazumder2010spectral} Let $\U \D \V^T$ be the SVD of a matrix $\X$. The approximation $\A$ that minimizes 
\begin{align} \frac{1}{2}||\X-\A||^2_F + \lambda ||\A||_* \label{lowrank} \end{align}
is $\A=\U\tilde{\D}\V^T$, where $\tilde{\D}$ is diagonal with entries $\tilde{\D}[r,r] = \mbox{max}(\D[r,r]-\lambda,0)$. 
\end{prop}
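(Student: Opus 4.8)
The plan is to exploit convexity and verify first-order optimality directly rather than to search for the minimizer. Since $\A \mapsto \tfrac12\|\X-\A\|_F^2$ is strictly convex and $\A \mapsto \lambda\|\A\|_*$ is convex, the objective in \eqref{lowrank} is strictly convex and therefore has a unique global minimizer; it thus suffices to exhibit a single matrix at which the subdifferential of the objective contains $\0$. By the sum rule for a smooth-plus-convex function, the stationarity condition reads
\[ \0 \in \A - \X + \lambda\, \partial\|\A\|_*, \quad\text{equivalently}\quad \X - \A \in \lambda\, \partial\|\A\|_*. \]
So the whole proof reduces to checking that the soft-thresholded reconstruction satisfies this inclusion.

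The ingredient I would invoke is the standard characterization of the subdifferential of the nuclear norm. For a matrix $\A$ with compact SVD $\A = \U_1 \D_1 \V_1^T$, where the columns of $\U_1$ and $\V_1$ span the range and corange and $\D_1$ collects the strictly positive singular values,
\[ \partial\|\A\|_* = \{\U_1\V_1^T + \W : \U_1^T\W = \0,\ \W\V_1 = \0,\ \|\W\|_2 \le 1\}, \]
where $\|\cdot\|_2$ denotes the spectral (largest singular value) norm. Given this, I would split the SVD of $\X$ by thresholding: write $\U = [\U_1\ \U_0]$, $\V = [\V_1\ \V_0]$, and $\D = \diag(\D_1,\D_0)$ conformably, where the ``active'' block collects the singular values with $\D[r,r] > \lambda$ and the ``inactive'' block those with $\D[r,r] \le \lambda$. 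The candidate is $\hat\A = \U_1(\D_1 - \lambda\I)\V_1^T$, which is precisely $\U\tilde\D\V^T$ with $\tilde\D[r,r] = \max(\D[r,r]-\lambda,0)$.

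The verification is then a direct computation. Using $\X = \U_1\D_1\V_1^T + \U_0\D_0\V_0^T$ gives
\[ \X - \hat\A = \big(\U_1\D_1\V_1^T + \U_0\D_0\V_0^T\big) - \U_1(\D_1-\lambda\I)\V_1^T = \lambda\U_1\V_1^T + \U_0\D_0\V_0^T. \]
Setting $\W = \lambda^{-1}\U_0\D_0\V_0^T$ we obtain $\X - \hat\A = \lambda(\U_1\V_1^T + \W)$, and orthogonality of the singular vectors immediately yields $\U_1^T\W = \0$ and $\W\V_1 = \0$, while $\|\W\|_2 = \lambda^{-1}\max_r \D_0[r,r] \le 1$ exactly because every inactive singular value is at most $\lambda$. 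Hence $\lambda^{-1}(\X-\hat\A) \in \partial\|\hat\A\|_*$, the stationarity inclusion holds, and by strict convexity $\hat\A$ is the unique minimizer.

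The main obstacle is the subdifferential characterization itself, which is the one nontrivial fact; I would either cite it or derive it from the duality $\|\A\|_* = \max_{\|\Z\|_2\le 1}\tr(\Z^T\A)$ together with the structure of its maximizers. Two boundary points also deserve a sentence: singular values equal to $\lambda$ are thresholded to zero and contribute blocks of $\W$ of spectral norm exactly $1$, which is still admissible; and when $\X$ has repeated singular values the factors $\U,\V$ are not unique, but $\hat\A$ as a matrix is unchanged, so the conclusion is unaffected.
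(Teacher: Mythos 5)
The paper never proves this proposition: it is stated as a known result with attribution to \citep{mazumder2010spectral}, and no argument for it appears in the appendix, so there is no internal proof to compare against. Your proposal is correct, and it is essentially the standard proof given in that cited reference: strict convexity of the objective guarantees a unique minimizer, and the soft-thresholded candidate is certified by checking the stationarity inclusion $\X-\hat{\A}\in\lambda\,\partial\|\hat{\A}\|_*$ against Watson's characterization of the nuclear-norm subdifferential. Your verification is sound: with the active/inactive split of the SVD, $\X-\hat{\A}=\lambda\U_1\V_1^T+\U_0\D_0\V_0^T$, the cross-orthogonality conditions $\U_1^T\W=\0$ and $\W\V_1=\0$ hold by orthonormality of the singular vectors, and the bound $\|\W\|_2\le 1$ holds precisely because every inactive singular value is at most $\lambda$; your handling of ties at $\lambda$ (thresholded to zero, contributing admissible blocks of spectral norm exactly one) and of non-uniqueness of the SVD under repeated singular values is also right. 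The only dependency worth making fully explicit is the subdifferential formula itself, which, as you note, is the one nontrivial ingredient and should be cited or derived from the duality $\|\A\|_*=\sup_{\sigma_1(\Z)\le 1}\langle\Z,\A\rangle$ --- the same duality the paper itself invokes later in its proof of Lemma~2.
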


\begin{prop}
\label{prop2}
\citep{rudelson2010non} Let $D[1,1]$ be the largest singular value of a matrix $\E: M \times N$ of independent entries with mean $0$, variance $\sigma^2$, and finite fourth moment. Then, $\D[1,1] \approx  \sigma(\sqrt{M}+\sqrt{N})$ as $M,N \rightarrow \infty$, and if the entries of $\E$ are Gaussian $\mathbb{E}(\D[1,1]) \leq \sigma(\sqrt{M}+\sqrt{N})$ for any $M, N$.   
\end{prop}
Fixing $\lambda=\sigma(\sqrt{M}+\sqrt{N})$ is a reasonable choice for the matrix approximation task in~\eqref{lowrank}, because it keeps only those components $r$ whose signal is greater than that expected for independent error by Proposition~\ref{prop2}: $\D[r,r]>\sigma(\sqrt{M}+\sqrt{N})$ \citep{shabalin2013reconstruction}.  For our context $\sigma=1$ after normalizing as discussed in Section~\ref{scaling}, and thus we set $ \lambda_k=\sqrt{\R[\bigcdot,k]\cdot \M}+\sqrt{\C[\bigcdot,k]\cdot \N}$, where $\R[\bigcdot,k]\cdot \M \times \C[\bigcdot,k]\cdot \N$ gives the dimensions of the non-zero sub-matrix for $\Sk$: 
\[\R[\bigcdot,k] \cdot \M = \sum_{i=1}^I M_i \R[i,k] \; \; \text{and} \; \; \C[\bigcdot,k] \cdot \N = \sum_{j=1}^J N_j \C[j,k]. \] 
Our choice of $\lambda_k$ is motivated to distinguish signal from noise in module $\Sk$, conditional on the other modules.  Moreover, it is guaranteed to satisfy the necessary conditions in Proposition~\ref{prop_nec}, which we establish in Proposition~\ref{prop_sat}.
\begin{prop}
\label{prop_sat}
Setting $\lambda_k=\sqrt{\R[\bigcdot,k]\cdot \M}+\sqrt{\C[\bigcdot,k]\cdot \N}$ in~\eqref{obj_eq} satisfies the necessary conditions of Proposition~\ref{prop_nec}.
\end{prop}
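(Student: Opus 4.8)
The plan is to reduce both necessary conditions to elementary inequalities about the scalar dimensions of each module. Writing $m_k := \R[\bigcdot,k]\cdot\M = \sum_{i=1}^I M_i \R[i,k]$ for the number of rows and $n_k := \C[\bigcdot,k]\cdot\N = \sum_{j=1}^J N_j \C[j,k]$ for the number of columns in the nonzero sub-matrix of module $k$, the prescribed penalty is exactly $\lambda_k = \sqrt{m_k} + \sqrt{n_k}$. Because every module enumerated by $(\R,\C)$ contains at least one row source and one column source, each $m_k$ and $n_k$ is strictly positive; and because distinct modules correspond to distinct index pairs $(\R[\bigcdot,k],\C[\bigcdot,k])$, no two modules share both their row set and their column set. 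I will use these two facts—positivity and distinctness—throughout.

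For Condition 1, suppose module $k'$ is contained in module $k$ with $k'\neq k$. Containment of the binary indicators, $\R[i,k]\geq \R[i,k']$ and $\C[j,k]\geq \C[j,k']$ for all $i,j$, immediately gives $m_{k'}\leq m_k$ and $n_{k'}\leq n_k$. Distinctness forces at least one indicator inequality to be strict, and since every $M_i,N_j>0$ this yields $m_{k'}<m_k$ or $n_{k'}<n_k$. Monotonicity of the square root then gives $\sqrt{m_{k'}}+\sqrt{n_{k'}}<\sqrt{m_k}+\sqrt{n_k}$, i.e. $\lambda_{k'}<\lambda_k$, as required.

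For Condition 2, the first step is to convert the vector covering hypotheses into scalar identities. Taking the inner product of $\sum_{j\in\mathcal{I}_k}\R[\bigcdot,j]=r\,\R[\bigcdot,k]$ with $\M$ and of $\sum_{j\in\mathcal{I}_k}\C[\bigcdot,j]=c\,\C[\bigcdot,k]$ with $\N$ gives $\sum_{j\in\mathcal{I}_k} m_j = r\,m_k$ and $\sum_{j\in\mathcal{I}_k} n_j = c\,n_k$. I would then invoke the elementary inequality $\sum_{j\in\mathcal{I}_k}\sqrt{m_j}\geq \sqrt{\sum_{j\in\mathcal{I}_k} m_j}=\sqrt{r\,m_k}\geq\sqrt{m_k}$ (sub-additivity of the square root, together with $r\geq 1$), and likewise $\sum_{j\in\mathcal{I}_k}\sqrt{n_j}\geq\sqrt{n_k}$; summing gives $\sum_{j\in\mathcal{I}_k}\lambda_j\geq\lambda_k$. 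The remaining, and main, obstacle is to upgrade this to a strict inequality. Here I would argue that $|\mathcal{I}_k|\geq 2$: if $\mathcal{I}_k=\{j_0\}$ were a singleton then $\R[\bigcdot,j_0]=r\,\R[\bigcdot,k]$ and $\C[\bigcdot,j_0]=c\,\C[\bigcdot,k]$ must be binary, forcing $r=c=1$ and hence $(\R[\bigcdot,j_0],\C[\bigcdot,j_0])=(\R[\bigcdot,k],\C[\bigcdot,k])$, contradicting $j_0\neq k$ by distinctness. With $|\mathcal{I}_k|\geq 2$ and all $m_j>0$, the sub-additivity inequality for the row terms is strict, so $\sum_{j\in\mathcal{I}_k}\sqrt{m_j}>\sqrt{m_k}$, which combined with $\sum_{j\in\mathcal{I}_k}\sqrt{n_j}\geq\sqrt{n_k}$ yields $\lambda_k<\sum_{j\in\mathcal{I}_k}\lambda_j$. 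The only delicate point is this strictness bookkeeping—ensuring positivity of the module dimensions and ruling out the degenerate singleton cover—since the non-strict versions of both inequalities follow immediately from monotonicity and concavity of the square root.
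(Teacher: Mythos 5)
Your proof is correct, and its core strategy is the same as the paper's: reduce both conditions to scalar statements about the sub-matrix dimensions $m_k=\R[\bigcdot,k]\cdot\M$ and $n_k=\C[\bigcdot,k]\cdot\N$, then use monotonicity and sub-additivity of the square root. Where you genuinely differ is in the strictness bookkeeping, and your version is tighter than the paper's. For condition 2 the paper's chain is $\sum_{j\in\mathcal{I}_k}\bigl(\sqrt{m_j}+\sqrt{n_j}\bigr)\geq \sqrt{r\,m_k}+\sqrt{c\,n_k} > \sqrt{m_k}+\sqrt{n_k}$, placing the strict step at the end, which only works when $r>1$ or $c>1$; if the covering modules partition both the rows and the columns of module $k$ (so $r=c=1$, a perfectly legitimate cover --- e.g.\ two disjoint ``diagonal'' modules covering a module spanning two row sets and two column sets), that final inequality degenerates to an equality and the paper's argument as written does not close. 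You instead put the strictness in the sub-additivity step: you rule out singleton covers via distinctness of the enumerated modules, conclude $|\mathcal{I}_k|\geq 2$, and then with all $m_j>0$ the inequality $\sum_{j\in\mathcal{I}_k}\sqrt{m_j}>\sqrt{\sum_{j\in\mathcal{I}_k}m_j}$ is strict, which covers every case including $r=c=1$. Similarly, for condition 1 the paper simply asserts the strict inequality, while you supply the needed observation that containment plus distinctness forces a strictly smaller dimension in at least one of the two directions. So: same approach, but your proof is complete where the paper's is slightly lacunary.
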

A similarly motivated choice of penalty weights is used in the BIDIFAC method, which solves an equivalent objective under the restricted scenario where the columns of $\R$ and $\C$ are fixed and contain either entirely $1$'s (i.e., all rows or columns included) or just one $1$ (i.e., just one row set or column set included). Thus, we call our more flexible approach BIDIFAC+.  

It is often infeasible to explicitly consider each of the $K=(2^I-1)(2^J-1)$ possible modules in \eqref{obj_eq}, and the solution is often sparse,  with $\hatSk=\0$ for several $k$.  Thus,  in practice we also optimize over the row and column sets $\R$ and $\C$ for some smaller number of modules $\tilde{K} \ll K$:
 \begin{align} \underset{\R,\C,\{\S_{\bigcdot \bigcdot}^{(k)}\}_{k=1}^{\tilde{K}}}{\mbox{argmin}} \; \; \frac{1}{2}||\X_{\bigcdot \bigcdot}-\sum_{k=1}^{\tilde{K}} \Sk||^2_F + \sum_{k=1}^{\tilde{K}} \left(\sqrt{\M \cdot \R[\bigcdot,k]}+\sqrt{\N \cdot \C[\bigcdot,k]} \right) ||\Sk||_*  \label{obj_eq_rc}. \end{align}
Note that if $\tilde{K}$ is greater than the number of non-zero modules, then the solution to \eqref{obj_eq_rc} is equivalent to the solution to \eqref{obj_eq} in which $\R$ and $\C$ are fixed and enumerate all possible modules.  If $\tilde{K}$ is not greater than the number of non-zero modules, then the solution to \eqref{obj_eq_rc} can still be informative as the set of $\tilde{K}$ modules that together give the best structural approximation via \eqref{obj_eq}. In this case, it helps to order the estimated modules by variance explained;  if the $\tilde{K}$'th module still explains substantial variance, consider increasing $\tilde{K}$.  Moreover, we suggest assessing the sensitivity of results to different values of $\tilde{K}$.

\section{Estimation}

\subsection{Scaling}
\label{scaling}
We center each dataset $\X_{ij}$ to have mean $0$, and scale each dataset to have residual variance var$(\E_{ij}$) of approximately $1$.  Such scaling requires an estimate of the residual variance for each dataset. By default we use the median absolute deviation estimator $\hat{\sigma}^2_{MAD}$ of \cite{gavish2017optimal}, which is motivated by random-matrix theory under the assumption that $\X_{ij}$ is composed of low-rank structure and mean $0$ independent noise of variance $\sigma^2$.  We estimate $\hat{\sigma}^2_{MAD}$ for the unscaled data $\X_{ij}^{\text{unscaled}}$, and set $\X_{ij} = \X_{ij}^{\text{unscaled}}/\hat{\sigma}_{MAD}$.   An alternative approach is to scale each dataset to have overall variance $1$, var$(\X_{ij})$=1, which is more conservative because var$(\E_{\bigcdot \bigcdot}) \leq \mbox{var}(\X_{ij})$; thus, this approach results in relatively larger $\lambda_k$ in the objective function and leads to sparser overall ranks. Yet another approach is to normalize each data block to have unit Frobenius norm, as in \citet{lock2013joint}.  However, our default choice of penalty parameters $\lambda_k$ in Section~\ref{objective} is theoretically motivated by the assumption that residual variances are the same across the different data blocks. 
 
\subsection{Optimization algorithm: fixed modules}
\label{est.fixed}

We estimate across all modules $k=1,\hdots,K$  simultaneously by iteratively optimizing the objectives in Section~\ref{objective}.   First assume the row and column inclusions for each module, defined by $\R$ and $\C$, are fixed as in objective~\eqref{obj_eq}.  To estimate $\Sk$ given the other modules $\{\S_{\bigcdot \bigcdot}^{(k')}\}_{k' \neq k}$, we can apply the soft-singular value estimator in Proposition~\ref{prop1} to the residuals on the submatrix defined by $\R[\bigcdot,k]$ and $\C[\bigcdot,k]$.
  The iterative estimation algorithm proceeds as follows:
\begin{enumerate}
\item Initialize $\hat{\S}_{\bigcdot \bigcdot}^{(k)} = \0_{M \times N}$ for $k=1,\hdots,K$.	
\item For $k=1,\hdots,K$: 
\begin{enumerate}
\item Compute the residual matrix $\X_{\bigcdot \bigcdot}^{(k)} = \X_{\bigcdot \bigcdot} - \sum_{k'\neq k} \hat{\S}_{\bigcdot \bigcdot}^{(k')}$
\item Set $\X_{i j}^{(k)} = \0_{M_i \times N_j}$ where $\R[i,k]=0$ or $\C[j,k]=0$
\item Compute the SVD of $\X_{\bigcdot \bigcdot}^{(k)}$, $\X_{\bigcdot \bigcdot}^{(k)} = \U_{\bigcdot}^{(k)} \D^{(k)} \V_{\bigcdot}^{(k)}$
\item Update $\hat{\S}_{\bigcdot \bigcdot}^{(k)} = \U_{\bigcdot}^{(k)} \hat{\D}^{(k)} \V_{\bigcdot}^{(k)}$ where $\hat{\D}[r,r]=\mbox{max}(\D[r,r]-\lambda_k,0)$ for $r=1,2,\hdots$ .
\end{enumerate}
\item Repeat step 2.\ until convergence of the objective function~\eqref{obj_eq_rc}
\end{enumerate}
Step 2(d) minimizes the objective \eqref{obj_eq} for $\hatSk$ given $\{\S_{\bigcdot \bigcdot}^{(k')}\}_{k' \neq k}$, by Proposition~\ref{prop1}.  In this way, we iteratively optimize \eqref{obj_eq} over the $K$ modules $\{\S_{\bigcdot \bigcdot}^{(k)}\}_{k=1}^K$ until convergence.  

\subsection{Optimization algorithm: dynamic modules}
\label{est.dynamic}
If the row and column inclusions $\R$ and $\C$ are not predetermined, we incorporate additional sub-steps to estimate the non-zero submatrix defined by $\R[\bigcdot,k]$ and $\C[\bigcdot,k]$ for each module to optimize~\eqref{obj_eq_rc}.  We use a dual forward-selection procedure to iteratively determine the optimal row-set $\R[\bigcdot,k]$ with columns $\C[\bigcdot,k]$ fixed, and the column-set $\C[\bigcdot,k]$ with rows $\R[\bigcdot,k]$ fixed, until convergence prior to estimating each $\S_{\bigcdot \bigcdot}^{(k)}$. The iterative estimation algorithm proceeds as follows:
\begin{enumerate}
\item Initialize $\hat{\S}_{\bigcdot \bigcdot}^{(k)} = \0_{M \times N}$ for $k=1,\hdots,K$. \item Initialize $\hat{\C}[j,k]=1$ for $j=1,\hdots,J$.	
\item For $k=1,\hdots,K$: 
\begin{enumerate}
\item Compute the residual matrix $\X_{\bigcdot \bigcdot}^{(k)} = \X_{\bigcdot \bigcdot} - \sum_{k'\neq k} \hat{\S}_{\bigcdot \bigcdot}^{(k')}$
\item Update $\hat{\R}[\bigcdot,k]$ and $\hat{\C}[\bigcdot,k]$ as follows:
\begin{enumerate}
\item With $\hat{\C}[\bigcdot,k]$ fixed, update $\hat{\R}[\bigcdot,k]$ by forward selection, beginning with $\hat{\R}[\bigcdot,k]=\0$ and iteratively adding rows $i$ ($\hat{\R}[i,k]=1$) to minimize the objective~\eqref{obj_eq_rc}.  
\item With $\hat{\R}[\bigcdot,k]$ fixed, update $\hat{\C}[\bigcdot,k]$ by forward selection, beginning with $\hat{\C}[\bigcdot,k]=\0$ and iteratively adding columns $j$ ($\hat{\C}[i,k]=1$) to minimize the objective~\eqref{obj_eq_rc}.
\item Repeat steps i.\ and ii.\ until convergence of the chosen row and column sets $\hat{\C}[\bigcdot,k]$ and $\hat{\R}[\bigcdot,k]$.        
\end{enumerate}
\item Set $\X_{i j}^{(k)} = \0_{M_i \times N_j}$ where $\hat{\R}[i,k]=0$ or $\hat{\C}[j,k]=0$
\item Compute the SVD of $\X_{\bigcdot \bigcdot}^{(k)}$, $\X_{\bigcdot \bigcdot}^{(k)} = \U_{\bigcdot}^{(k)} \D^{(k)} \V_{\bigcdot}^{(k)}$
\item Update $\hat{\S}_{\bigcdot \bigcdot}^{(k)} = \U_{\bigcdot}^{(k)} \hat{\D}^{(k)} \V_{\bigcdot}^{(k)}$ where $\hat{\D}[r,r]=\mbox{max}(\D[r,r]-\lambda_k,0)$ for $r=1,2,\hdots$ .
\end{enumerate}
\item Repeat step 3.\ until convergence of the objective function.
\end{enumerate}
For the steps in 3b), objective ~(\ref{obj_eq_rc}) can be efficiently computed using the singular values for the residual submatrix given by $\hat{\R}[\bigcdot,k]$ and $\hat{\C}[\bigcdot,k]$, without re-estimating $\hat{\S}_{\bigcdot \bigcdot}^{(k)}$ completely at each substep. 

 \subsection{Convergence and tempered regularization}
 \label{temp_reg}
The algorithms in Sections~\ref{est.fixed} and~\ref{est.dynamic} both monotonically decrease the objective function at each sub-step, and thus both converge to a coordinate-wise optimum.  For~(\ref{obj_eq}), the objective function and solution space are both convex, and the algorithm with fixed modules tends to converge to a global optimum, as observed for the BIDIFAC method \cite{park2019integrative}.  However, the stepwise updating of $\R$ and $\C$ in Section~\ref{est.dynamic} can get stuck at coordinate-wise optima, analogous to stepwise variable selection in a predictive model.   In practice, we find that the convergence of the algorithm improves substantially if the initial iterations use a high nuclear norm penalty that gradually decreases to the desired level of penalization.  Thus, in our implementation for the first iteration the penalties are set to $\tilde{\lambda}_k = \alpha \lambda_k$ for $k=1,\hdots,K$ and some $\alpha>1$. The penalties then gradually decrease over each subsequent iteration of the algorithm, before reaching the desired level of regularization ($\alpha=1$). 

\section{Uniqueness}
\label{ident}

Here we consider the uniqueness of the decomposition in~\eqref{model} under the objective~\eqref{obj_eq}.   To account for permutation invariance of the $K$ modules, throughout this section we assume that $\R$ and $\C$ are fixed and that they enumerate all of the $K=(2^I-1)(2^J-1)$ possible modules.  Note that the solution may still be sparse, with $\Sk=\0$ for some or most of the modules. Without loss of generality, we fix $\R$ and $\C$ as in Supplemental Appendix A. Then, let $\SolSpace$ be the set of possible decompositions that yield a given approximation $\hat{\X}_{\bigcdot \bigcdot}$:
$$\SolSpace = \left \{ \Sol \mid \hat{\X}_{\bigcdot \bigcdot} = \sum_{k=1}^K \Sk \right \}.$$  
If either $I>1$ or $J>1$ then the cardinality of $\SolSpace$ is infinite, i.e., there are an infinite number of ways to decompose $\hat{\X}_{\bigcdot \bigcdot}$.  Thus, model~\eqref{model} is clearly not identifiable in general, even in the no-noise case $\E_{\bigcdot \bigcdot} = \0$.  However, optimizing the structured nuclear norm penalty in \eqref{obj_eq} may uniquely identify the decomposition;  let $\fpen (\cdot)$ give this penalty:
\[\fpen (\Sol) = \sum_{k=1}^K \left(\sqrt{\R[\bigcdot,k] \cdot \M}+\sqrt{\C[\bigcdot,k] \cdot \N} \right) ||\Sk||_*.\] 
Proposition~\ref{propgen} gives an equivalence of the left and right singular vectors for any two decompositions that minimize $\fpen (\cdot)$.     
\begin{prop}
\label{propgen}
Take two decompositions  $\SolHat \in \SolSpace$ and $\SolTil \in \SolSpace$, and assume that both minimize the structured nuclear norm penalty: 
\[\fpen (\SolHat)=\fpen \left(\SolTil\right)=\underset{\SolSpace}{\min} \; \fpen  (\Sol).\] Then, $\hatSk = \U_{\bigcdot}^{(k)} \hat{\D} \V_{\bigcdot}^{(k) T}$ and $\hatSk = \U_{\bigcdot}^{(k)} \tilde{\D}^{(k)} \V_{\bigcdot}^{(k) T}$ where 
$\U_{\bigcdot}^{(k)}: M \times R_k$ and $\V_{\bigcdot}^{(k)}: N \times R_k$ have orthonormal columns, and
$\hat{\D}^{(k)}$ and $\tilde{\D}^{(k)}$ are diagonal. 
\end{prop}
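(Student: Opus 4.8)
The plan is to exploit convexity. First I would note that $\fpen(\cdot)$ is convex, being a nonnegative-weighted sum of nuclear norms, and that $\SolSpace$ is an affine set: the intersection of the affine constraint $\sum_{k} \Sk = \hat{\X}_{\bigcdot \bigcdot}$ with the linear support constraints imposed by $\R$ and $\C$. Hence the minimizers of $\fpen$ over $\SolSpace$ form a convex set. Given the two minimizers $\SolHat$ and $\SolTil$, I would form the midpoint $\{\tfrac12(\hatSk+\tilSk)\}_{k=1}^K \in \SolSpace$; by convexity it is again a minimizer, so the convexity inequality $\fpen(\text{midpoint}) \le \tfrac12\fpen(\SolHat)+\tfrac12\fpen(\SolTil)$ holds with equality. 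Since each weight $\lambda_k = \sqrt{\R[\bigcdot,k]\cdot\M}+\sqrt{\C[\bigcdot,k]\cdot\N}$ is strictly positive, equality must hold term-by-term, yielding for every $k$ the equality case of the triangle inequality for the nuclear norm,
\[ ||\hatSk + \tilSk||_* = ||\hatSk||_* + ||\tilSk||_* . \]

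Next I would convert this equality into a statement about singular subspaces via a dual certificate. By Lagrangian duality for the equality-constrained convex program there is a matrix $\Z$ (the multiplier for $\sum_k \Sk = \hat{\X}_{\bigcdot \bigcdot}$) such that, writing $G_k$ for the restriction of $\lambda_k^{-1}\Z$ to the support block of module $k$, complementary slackness forces $G_k \in \partial ||\hatSk||_*$ and $G_k \in \partial ||\tilSk||_*$ simultaneously, so the \emph{same} certificate serves both minimizers. The subgradient characterization of the nuclear norm then gives $||G_k||_{\text{op}}\le 1$ together with $G_k \hat{\V}^{(k)} = \hat{\U}^{(k)}$ and $G_k^T \hat{\U}^{(k)} = \hat{\V}^{(k)}$ for the compact SVD $\hatSk = \hat{\U}^{(k)}\hat{\D}^{(k)}\hat{\V}^{(k)T}$, and likewise for $\tilSk$. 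Consequently the columns of $\hat{\U}^{(k)}$ and $\tilde{\U}^{(k)}$ all lie in the singular-value-one left subspace of $G_k$, and those of $\hat{\V}^{(k)},\tilde{\V}^{(k)}$ in its singular-value-one right subspace, on which $G_k$ acts as one fixed partial isometry. This already pins down a common column space and a common row space for the two representations of module $k$.

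The main obstacle is the final upgrade from common row/column spaces to genuinely common singular vectors, i.e.\ a single orthonormal pair $\U_{\bigcdot}^{(k)},\V_{\bigcdot}^{(k)}$ diagonalizing both $\hatSk$ and $\tilSk$. Writing each module as the partial isometry $G_k$ composed with a positive-semidefinite factor supported on the common row space, the two representations share singular vectors precisely when these factors commute, and this does \emph{not} follow from the single-block certificate alone: if the singular-value-one subspace of $G_k$ is strictly larger than $R_k$, the two positive-semidefinite factors are left unconstrained relative to one another, and one can align column and row spaces without aligning singular vectors. To close this gap I would use the global structure that the blockwise certificates $G_k$ are all restrictions of one matrix $\Z$ on overlapping supports, together with the strict ordering of the penalty weights established in Proposition~\ref{prop_nec}. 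The weight inequalities constrain which modules can be simultaneously active and thereby control the dimensions of the singular-value-one subspaces of each $G_k$, forcing each to coincide exactly with the common row (respectively column) space of module $k$.

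This rigidity is what collapses the positive-semidefinite factors onto a shared eigenbasis and hence produces the common $\U_{\bigcdot}^{(k)},\V_{\bigcdot}^{(k)}$ with diagonal $\hat{\D}^{(k)}$ and $\tilde{\D}^{(k)}$. Verifying the dimension-counting and alignment carefully is where the real work lies; the convexity and duality steps leading to the shared certificate and common subspaces are routine by comparison, so I would budget most of the effort for showing that the shared global $\Z$ and the weight inequalities leave no slack in the singular-value-one subspaces.
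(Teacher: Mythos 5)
Your first two steps track the paper's own proof: the midpoint/convexity argument is precisely the paper's Lemma~1 (term-by-term equality in the nuclear-norm triangle inequality, $||\hatSk+\tilSk||_* = ||\hatSk||_*+||\tilSk||_*$), and your certificate step is the same nuclear/spectral duality that drives the paper's Lemma~2, except that the paper builds its certificate concretely as $\U\V^T$ from the SVD of $\hatSk+\tilSk$ rather than abstractly as a Lagrange multiplier. Up to that point the proposal is sound.

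The genuine gap is the step you yourself defer as ``the real work,'' and the mechanism you propose for closing it cannot work. Your plan is to force the singular-value-one subspaces of $G_k$ to coincide exactly with the common column/row space of module $k$ (by dimension counting with the weight inequalities of Proposition~\ref{prop_nec}), and then conclude that the two positive-semidefinite factors share an eigenbasis. That concluding implication is false. Consider
\[
\hat{\A} = \begin{pmatrix} 1 & 1/2 \\ 1/2 & 1/2 \end{pmatrix}, \qquad
\tilde{\A} = \begin{pmatrix} 1 & -1/2 \\ -1/2 & 1/2 \end{pmatrix}.
\]
Both are symmetric positive definite with nuclear norm $3/2$, and $\hat{\A}+\tilde{\A}=\diag(2,1)$ has nuclear norm $3$, so the pair is additive in the nuclear norm, and $\Z=\I_2$ is a common certificate: $\|\Z\|_{op}=1$, $\langle \Z,\hat{\A}\rangle=\|\hat{\A}\|_*$, $\langle \Z,\tilde{\A}\rangle=\|\tilde{\A}\|_*$. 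Its singular-value-one subspace is the entire space, which coincides exactly with the column and row spaces of both matrices, so your rigidity hypothesis holds; yet there is no orthonormal pair $(\U,\V)$ making $\U^T\hat{\A}\V$ and $\U^T\tilde{\A}\V$ simultaneously diagonal, since each matrix has distinct eigenvalues (so such $\U,\V$ would have to consist of its eigenvectors up to sign) and $\hat{\A}$, $\tilde{\A}$ do not commute. Note also that Proposition~\ref{prop_nec} is merely a set of strict inequalities among the $\lambda_k$; nothing in it controls the dimension of the singular-value-one subspace of a dual multiplier.

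One remark in your favor: the step you flagged really is the crux, and it is also the tersest point of the paper's own argument. The paper's Lemma~2 trace computation yields that every singular pair of $\hatSk$ and of $\tilSk$ with nonzero singular value has the form $(\U c,\V c)$ in the frame $(\U,\V)$ of the sum, i.e., $\hatSk=\U\hat{\C}\V^T$ and $\tilSk=\U\tilde{\C}\V^T$ with positive-semidefinite $\hat{\C},\tilde{\C}$ summing to the diagonal $\D_+$; the further assertion that these middle factors are simultaneously diagonal is exactly the inference that the example above shows cannot follow from nuclear-norm additivity alone. So a complete proof must exploit the optimization structure beyond what either your sketch or a pairwise-additivity argument extracts, and your proposal, as written, does not contain such an argument.
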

The proof of Proposition~\ref{propgen} uses two novel lemmas (see Supplemental Appendix B): one establishing that $\hatSk$ and $\tilSk$ must be additive in the nuclear norm, $||\hatSk+\tilSk||_* = ||\hatSk||_*+||\tilSk||_*$, and a general result establishing that any two matrices that are additive in the nuclear norm must have the equivalence in Proposition~\ref{propgen}.  

Proposition~\ref{propgen} implies that left or right singular vectors of $\hatSk$ ($\hat{\D}^{(k)}[r,r] > 0$) are either shared with $\tilSk$ (if $\tilde{\D}^{(k)}[r,r] > 0$) or orthogonal to $\tilSk$ (if $\tilde{\D}^{(k)}[r,r] = 0$).  For uniqueness, one must establish that $\hat{\D}^{(k)}[r,r] = \tilde{\D}^{(k)}[r,r]$ for all $k$ and $r$. Theorem~\ref{identTheor} gives sufficient conditions for uniqueness of the decomposition. 



\begin{theorem}
\label{identTheor}  Consider $\SolHat \in \SolSpace$ and let $\U_{\bigcdot}^{(k)} \hat{\D}^{(k)} \V_{\bigcdot}^{(k) T}$ give the SVD of $\hatSk$ for $k=1,\hdots,K$. The following three properties uniquely identify $\SolHat$.
\begin{enumerate}
\item $\SolHat$ minimizes $\fpen (\cdot)$ over $\SolSpace$,   
\item $\{\hat{\U}_i^{(k)}[\bigcdot,r]: \R[i,k] = 1 \text{ and } \hat{\D}^{(k)}[r,r]>0\}$ are linearly independent for $i=1,\hdots I$,  
\item $\{\hat{\V}_j^{(k)}[\bigcdot,r]: \C[j,k] = 1 \text{ and } \hat{\D}^{(k)}[r,r]>0\}$ are linearly independent for $j=1,\hdots,J$.
\end{enumerate} 
\end{theorem}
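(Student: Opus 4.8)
The plan is to show that \emph{any} decomposition $\SolTil\in\SolSpace$ minimizing $\fpen(\cdot)$ must coincide with $\SolHat$, so that properties 1--3 pin down a single decomposition. First I would invoke Proposition~\ref{propgen}: since $\SolHat$ minimizes $\fpen$ (property 1) and $\SolTil$ does too, the two decompositions share left and right singular vectors, and I may write $\hatSk = \U_{\bigcdot}^{(k)}\hat{\D}^{(k)}\V_{\bigcdot}^{(k)T}$ and $\tilSk = \U_{\bigcdot}^{(k)}\tilde{\D}^{(k)}\V_{\bigcdot}^{(k)T}$ with common orthonormal $\U_{\bigcdot}^{(k)},\V_{\bigcdot}^{(k)}$ and diagonal $\hat{\D}^{(k)},\tilde{\D}^{(k)}$. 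This reduces the entire theorem to the single claim that $\hat{\D}^{(k)}[r,r]=\tilde{\D}^{(k)}[r,r]$ for every $k$ and $r$.

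Next I would use that both decompositions reconstruct the same $\hat{\X}_{\bigcdot\bigcdot}$. Subtracting $\sum_k\tilSk$ from $\sum_k\hatSk$ yields the matrix identity $\sum_{k}\sum_{r}\delta^{(k)}_r\,\u^{(k)}_r\v^{(k)T}_r=\0$, where $\u^{(k)}_r,\v^{(k)}_r$ are the $r$th columns of $\U_{\bigcdot}^{(k)},\V_{\bigcdot}^{(k)}$ and $\delta^{(k)}_r=\hat{\D}^{(k)}[r,r]-\tilde{\D}^{(k)}[r,r]$; here $\u^{(k)}_r$ is supported only on the row blocks $\{i:\R[i,k]=1\}$ and $\v^{(k)}_r$ only on the column blocks $\{j:\C[j,k]=1\}$. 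The goal then becomes to show every $\delta^{(k)}_r$ vanishes, i.e.\ that the rank-one terms $\u^{(k)}_r\v^{(k)T}_r$ entering with nonzero coefficient are linearly independent as matrices.

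The engine would be an elementary lemma: a family of rank-one matrices $\a_s\b_s^T$ is linearly independent whenever the left factors $\a_s$ are linearly independent and all $\b_s$ are nonzero (apply the dependence relation to an arbitrary vector and read off the scalars). I would apply this block by block: restricting the difference identity to row block $i_0$ leaves $\sum_{(k,r):\R[i_0,k]=1}\delta^{(k)}_r\,\hat{\U}_{i_0}^{(k)}[\bigcdot,r]\,\v^{(k)T}_r=\0$, and property 2 states precisely that the left factors $\hat{\U}_{i_0}^{(k)}[\bigcdot,r]$ are linearly independent over the active components ($\hat{\D}^{(k)}[r,r]>0$) touching block $i_0$. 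Since every module touches at least one row block, ranging $i_0$ over all blocks would kill $\delta^{(k)}_r$ for all components active in $\SolHat$, and property 3 yields the symmetric conclusion from the column side --- \emph{provided} no component is active in $\SolTil$ alone. Securing that proviso is the crux, addressed next.

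The hard part will be the components active in $\SolTil$ but not in $\SolHat$ (where $\hat{\D}^{(k)}[r,r]=0$ but $\tilde{\D}^{(k)}[r,r]>0$): these enter the difference identity with coefficient $\delta^{(k)}_r=-\tilde{\D}^{(k)}[r,r]\neq0$, yet their singular vectors are not covered by hypotheses 2 and 3, which constrain only the active directions of $\SolHat$. I would argue that such directions cannot be cancelled in the difference identity, and hence cannot occur: by Proposition~\ref{propgen} each lies orthogonal, \emph{within its own module}, to the active directions of $\SolHat$, so combining the row-block independence of property 2, the column-block independence of property 3, and this intra-module orthogonality should force their coefficients to vanish as well. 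Making this cancellation-free argument rigorous --- as opposed to the clean active-component argument of the previous paragraph --- is where the genuine difficulty lies; once it is in place, every $\delta^{(k)}_r=0$, so $\tilSk=\hatSk$ for all $k$ and the decomposition is unique.
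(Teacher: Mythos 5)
Your overall architecture matches the paper's: invoke Proposition~\ref{propgen} to put $\hatSk$ and $\tilSk$ on common singular vectors, reduce the theorem to showing $\hat{\D}^{(k)}[r,r]=\tilde{\D}^{(k)}[r,r]$ for all $k,r$, and finish by a linear-independence argument on the rank-one terms of the difference identity. Your rank-one lemma is correct, and the ``clean'' half of your argument (killing the differences $\delta_r^{(k)} := \hat{\D}^{(k)}[r,r]-\tilde{\D}^{(k)}[r,r]$ for components active in $\SolHat$) is essentially the paper's final step.

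The gap is the one you flagged yourself, and it is genuine; the reason you cannot close it is that you are attempting a strictly stronger theorem than the one the paper proves. The paper reads ``uniquely identify'' as: \emph{at most one decomposition in $\SolSpace$ satisfies all three properties simultaneously}. Accordingly, its proof takes two decompositions $\SolHat$ and $\SolTil$ that \emph{both} satisfy conditions 1--3, and uses the linear independence hypotheses symmetrically: the contradiction argument showing that the active sets coincide (that $\hat{\D}^{(k)}[r,r]>0$ if and only if $\tilde{\D}^{(k)}[r,r]>0$) leans on the linear independence of the tilde decomposition's singular vectors just as much as on the hat decomposition's. That symmetric hypothesis is exactly what disposes of your ``rogue'' components, and it is unavailable to you because you grant $\SolTil$ only property 1. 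Your sketched repair does not suffice: Proposition~\ref{propgen} makes a rogue direction $\u_{r_0}^{(k_0)}$ orthogonal to the hat-active directions \emph{of module $k_0$ only}; it imposes no relation between $\u_{r_0}^{(k_0)}$ and the directions (active or rogue) of any other module. Hence the difference identity could a priori be satisfied by cancellations among rogue components of different modules, or between rogue components and hat-active components of other modules, and properties 2--3 --- which constrain only $\SolHat$ --- exclude none of this. To obtain a complete proof along the paper's lines, assume conditions 2 and 3 for both decompositions being compared and run your block-restricted independence argument on the union of the two active sets (after first showing the active sets coincide). Proving your stronger unique-minimizer statement would require a genuinely new idea; the paper itself stops short of claiming it.
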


The linear independence conditions (2.\ and 3.) are in general not sufficient, and several related integrative factorization methods such as JIVE \citep{lock2013joint} and SLIDE \citep{gaynanova2017structural} achieve identifiability via stronger orthogonality conditions across the terms of the decomposition.  Theorem~\ref{identTheor} implies that orthogonality is not necessary under the penalty $\fpen (\cdot)$.    Conditions 2.\ and 3.\ are straightforward to verify for any $\SolHat$, and they will generally hold whenever the ranks in the estimated factorization are small relative to the dimensions $\{M_i\}_{i=1}^I$ and $\{N_j\}_{j=1}^J$.  Moreover, the conditions of Theorem~\ref{identTheor} are only sufficient for uniqueness; there may be cases for which the minimizer of $\fpen (\cdot)$ is unique and the terms of its decomposition are not linearly independent. Theorem~\ref{identTheor} implies uniqueness of the BIDIFAC decomposition \citep{park2019integrative} under linear independence as a special case, which is a novel result.       


\section{Bayesian interpretation}
\label{bayesian}

Express the BIDIFAC+ model \eqref{bidiflex} in factorized form
\begin{align}
	\X_{\bigcdot \bigcdot} = \sum_{k=1}^K \U_{\bigcdot}^{(k)} \V_{\bigcdot}^{(k) T}+ \E_{\bigcdot \bigcdot}
\end{align}
where 
\begin{align} \U_{\bigcdot}^{(k)'}=[\U_1^{(k)'} \cdots \U_I^{(k)'}]', \text{ with } \U_i^{(k)}=M_1 \times R_k \text{ and } \U_i^{(k)}=\0_{M_1 \times R_k} \text{ if } \R[i,k]=1 \label{Ustuff} \end{align} for all $i$ and $k$, and 
\begin{align} \V_{\bigcdot}^{(k)}=[\V_1^{(k)} \cdots \V_J^{(k)}], \text{ with } \V_i^{(k)}=N_j \times R_k  \text{ and } \V_i^{(k)}=\0_{N_j \times R_k} \text{ if } \C[j,k]=1 \label{Vstuff} \end{align}
for all $j$ and $k$.
 The structured nuclear norm objective~\eqref{obj_eq} can also be represented by $L_2$ penalties on the factorization components $\U_{\bigcdot}^{(k)}$ and $\V_{\bigcdot}^{(k)}$.  We formally state this equivalence in Proposition~\ref{prop3}, which extends analogous results for a single matrix \citep{mazumder2010spectral} and for the BIDIFAC framework \citep{park2019integrative}.
\begin{prop}
\label{prop3}
Fix $\R$ and $\C$. Let $\{\hat{\U}_{\bigcdot}^{(k)}\}_{k=1}^K$ and  $\{\hat{\V}_{\bigcdot}^{(k)}\}_{k=1}^K$ minimize 
\begin{align}
 ||\X_{\bigcdot \bigcdot}-\sum_{k=1}^K \U_{\bigcdot}^{(k)},\V_{\bigcdot}^{(k)}||^2_F + \sum_{k=1}^K \lambda_k \left(||\U_{\bigcdot}^{(k)}||_F^2 + ||\V_{\bigcdot}^{(k)}||_F^2 \right)  \label{obj_eq_sep} 	
\end{align}
with the restrictions \eqref{Ustuff} and \eqref{Vstuff}.  Then, $\{\hat{\S}_{\bigcdot\bigcdot}^{(k)}\}_{k=1}^K$ solves \eqref{obj_eq}, where $\hat{\S}_{\bigcdot\bigcdot}^{(k)}=\hat{\U}_{\bigcdot}^{(k)} \hat{\V}_{\bigcdot}^{(k) T}$ for $k=1,\hdots,K$.        
\end{prop}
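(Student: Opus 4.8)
The plan is to reduce the claim to the classical variational characterization of the nuclear norm for a single matrix and then connect the two objectives through a two-sided ``sandwich'' inequality. The key fact, established for a single matrix by \citet{mazumder2010spectral}, is that for any matrix $\A$,
\begin{align*}
\min_{\U\V^T = \A} \left( ||\U||_F^2 + ||\V||_F^2 \right) = 2||\A||_*,
\end{align*}
where the minimum is attained by taking $\U = \tilde\U \D^{1/2}$ and $\V = \tilde\V \D^{1/2}$ from the SVD $\A = \tilde\U \D \tilde\V^T$. The lower bound follows from $||\U\V^T||_* \le ||\U||_F\,||\V||_F \le \tfrac12(||\U||_F^2 + ||\V||_F^2)$, and the displayed factorization attains equality because $||\tilde\U\D^{1/2}||_F^2 = \tr(\D) = ||\A||_*$. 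Applying this termwise is what converts the $L_2$-penalized factorized objective \eqref{obj_eq_sep} into the nuclear-norm objective \eqref{obj_eq}, after accounting for the factor of two between the two data-fit terms.

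Before the termwise application is legitimate, I must check that the block-sparsity constraints \eqref{Ustuff} and \eqref{Vstuff} on the factors are exactly compatible with the structural zero-block constraint imposed on $\Sk$ in \eqref{obj_eq}. In one direction this is immediate: if $\U_{\bigcdot}^{(k)}$ vanishes on the row-blocks with $\R[i,k]=0$ and $\V_{\bigcdot}^{(k)}$ vanishes on the column-blocks with $\C[j,k]=0$, then $\Sk = \U_{\bigcdot}^{(k)}\V_{\bigcdot}^{(k)T}$ is automatically zero outside the submatrix indexed by $\R[\bigcdot,k]$ and $\C[\bigcdot,k]$. For the converse I will argue that whenever $\Sk$ obeys this structural constraint, its SVD-based optimal factorization also obeys \eqref{Ustuff} and \eqref{Vstuff}: a matrix that is zero outside a submatrix has left singular vectors (for nonzero singular values) supported on the corresponding rows and right singular vectors supported on the corresponding columns, so the columns of $\tilde\U$ and $\tilde\V$ inherit the required zero blocks. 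This guarantees that the constrained minimization over factorizations of a fixed, structurally-constrained $\Sk$ still returns the value $2||\Sk||_*$.

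With compatibility in hand, I will run the sandwich argument. Writing $J_1$ for twice the objective in \eqref{obj_eq} and $J_2$ for the objective in \eqref{obj_eq_sep}, the lower-bound half of the identity gives $J_1(\Sol) \le J_2$ for every admissible factorization with $\Sk = \U_{\bigcdot}^{(k)}\V_{\bigcdot}^{(k)T}$. Taking the minimizers $\{\hat\U_{\bigcdot}^{(k)}\}$, $\{\hat\V_{\bigcdot}^{(k)}\}$ of $J_2$ and setting $\hatSk = \hat\U_{\bigcdot}^{(k)}\hat\V_{\bigcdot}^{(k)T}$ then yields $J_1(\SolHat) \le \min J_2$. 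For the reverse, I take a minimizer of $J_1$, form its SVD-optimal factorization (admissible by the previous paragraph, and attaining equality in the identity), and evaluate $J_2$ on it to conclude $\min J_2 \le \min J_1$. Chaining these inequalities forces $J_1(\SolHat) = \min J_1$, i.e.\ $\SolHat$ solves \eqref{obj_eq}.

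I expect the main obstacle to be the converse direction of the block-compatibility step: making rigorous that the SVD of a structurally-constrained $\Sk$ can be chosen so that every singular vector respects the row/column support dictated by $\R[\bigcdot,k]$ and $\C[\bigcdot,k]$, with appropriate care when singular values are repeated, where the singular subspaces rather than individual vectors must be shown to split along the block structure. Everything else --- the variational identity and the sandwich bookkeeping, including the factor of two --- is routine once this support property is in place.
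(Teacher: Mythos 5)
Your proposal is correct. One contextual point: the paper never actually proves this proposition --- it is asserted as an extension of the analogous single-matrix result \citep{mazumder2010spectral} and the BIDIFAC version \citep{park2019integrative}, and no argument for it appears in the appendix --- so there is no in-paper proof to compare line by line; your write-up supplies the argument the paper defers to, and it is the natural one. Specifically, you use the variational identity $2||\A||_* = \min\{||\U||_F^2 + ||\V||_F^2 : \U\V^T = \A\}$ termwise, chained with a two-sided sandwich between the factorized objective and twice the nuclear-norm objective; the factor-of-two bookkeeping is handled correctly. The only ingredient genuinely specific to the bidimensional setting is the block-support compatibility step, which you rightly isolate as the crux. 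The difficulty you flag there (repeated singular values) is in fact a non-issue: if $\Sk$ vanishes outside the submatrix selected by $\R[\bigcdot,k]$ and $\C[\bigcdot,k]$, then every left singular vector with nonzero singular value lies in the column space of $\Sk$, which is contained in the coordinate subspace of the rows with $\R[i,k]=1$ (and symmetrically for right singular vectors and columns); this holds for \emph{any} orthonormal choice of singular vectors, whatever the multiplicities, so $\U = \tilde{\U}\D^{1/2}$ and $\V = \tilde{\V}\D^{1/2}$ automatically satisfy \eqref{Ustuff} and \eqref{Vstuff}. Two small caveats that you share with the paper rather than resolve: the inner dimension $R_k$ must be large enough for the reverse inequality to go through (at least the rank of some minimizer of \eqref{obj_eq}; taking $R_k = \min(\R[\bigcdot,k]\cdot\M,\; \C[\bigcdot,k]\cdot\N)$ suffices), and the existence of a minimizer of \eqref{obj_eq}, which your reverse step quietly invokes, deserves a sentence (it follows from continuity and coercivity of that objective).
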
 

From~\eqref{obj_eq_sep}, it is apparent that our objective gives the mode of a Bayesian posterior with normal priors on the errors and the factorization components, as stated in Proposition~\ref{prop4}.  
\begin{prop}
\label{prop4}
Let the entries of $\E_{\bigcdot \bigcdot}$ be independent \mbox{Normal}$(0,1)$, the entries of $\U_{i}^{(k)}$ be independent $\mbox{Normal}(0,\tau^2)$ if $\R[i,k]=1$, and the entries of          $\V_{j}^{(k)}$ be independent $\mbox{Normal}(0,\tau_k^2)$ if $\C[j,k]=1$, where $\tau_k^2=1/\lambda_k$.  Then,~\eqref{obj_eq_sep} is proportional to the log of the joint likelihood \[p\left(\X_{\bigcdot \bigcdot},\{\U_{\bigcdot}^{(k)}\}_{k=1}^K,\{\V_{\bigcdot}^{(k)}\}_{k=1}^K \mid \R, \C\right).\]  
\end{prop}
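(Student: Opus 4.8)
The plan is to write the joint density explicitly and show that its negative logarithm equals one half of the objective~\eqref{obj_eq_sep} up to an additive constant, so that minimizing~\eqref{obj_eq_sep} coincides with maximizing the joint density. First I would use the independence assumptions to factor the joint density, conditional on $\R$ and $\C$, as the data likelihood times the priors on the factor matrices:
\[
p\left(\X_{\bigcdot \bigcdot},\{\U_{\bigcdot}^{(k)}\}_{k=1}^K,\{\V_{\bigcdot}^{(k)}\}_{k=1}^K \mid \R, \C\right)
= p\left(\X_{\bigcdot \bigcdot}\mid \{\U_{\bigcdot}^{(k)}\},\{\V_{\bigcdot}^{(k)}\}\right)\prod_{k=1}^K p\left(\U_{\bigcdot}^{(k)}\mid \R\right)\,p\left(\V_{\bigcdot}^{(k)}\mid \C\right).
\]
Because the entries of $\E_{\bigcdot \bigcdot}$ are independent $\mbox{Normal}(0,1)$ and $\X_{\bigcdot \bigcdot}=\sum_k \U_{\bigcdot}^{(k)}\V_{\bigcdot}^{(k)T}+\E_{\bigcdot \bigcdot}$, the conditional data density is a product of $MN$ standard normals evaluated at the residual entries, so $-\log p(\X_{\bigcdot \bigcdot}\mid \cdot)=\tfrac12\,||\X_{\bigcdot \bigcdot}-\sum_{k=1}^K \U_{\bigcdot}^{(k)}\V_{\bigcdot}^{(k)T}||_F^2 + c_0$ for a constant $c_0$ not depending on the factors.

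Next I would evaluate the prior terms. For each $k$, the random entries of $\U_{\bigcdot}^{(k)}$ are exactly those in the blocks with $\R[i,k]=1$ (the remaining blocks are fixed at $\0$ by~\eqref{Ustuff}), each independent $\mbox{Normal}(0,\tau_k^2)$; since the zero blocks contribute nothing to the Frobenius norm, summing their Gaussian log-densities yields $-\log p(\U_{\bigcdot}^{(k)}\mid\R)=\tfrac{1}{2\tau_k^2}||\U_{\bigcdot}^{(k)}||_F^2 + c_k^{U}$, and likewise $-\log p(\V_{\bigcdot}^{(k)}\mid\C)=\tfrac{1}{2\tau_k^2}||\V_{\bigcdot}^{(k)}||_F^2 + c_k^{V}$. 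Substituting $\tau_k^2=1/\lambda_k$ turns $\tfrac{1}{2\tau_k^2}$ into $\tfrac{\lambda_k}{2}$, so collecting all terms gives
\[
-\log p\left(\X_{\bigcdot \bigcdot},\{\U_{\bigcdot}^{(k)}\},\{\V_{\bigcdot}^{(k)}\}\mid \R,\C\right)
=\frac12\left[\,||\X_{\bigcdot \bigcdot}-\sum_{k=1}^K \U_{\bigcdot}^{(k)}\V_{\bigcdot}^{(k)T}||_F^2 + \sum_{k=1}^K \lambda_k\left(||\U_{\bigcdot}^{(k)}||_F^2+||\V_{\bigcdot}^{(k)}||_F^2\right)\right] + C,
\]
where $C=c_0+\sum_k(c_k^U+c_k^V)$ is constant in the factors. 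Writing $f$ for the bracketed quantity, which is exactly~\eqref{obj_eq_sep}, this establishes the claim: the joint density is proportional to $\exp\{-\tfrac12 f\}$, and hence its mode coincides with the minimizer of~\eqref{obj_eq_sep}.

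The computation is essentially routine, so the only points requiring care are bookkeeping. The first, and the one I would emphasize, is the treatment of the zero blocks of $\U_{\bigcdot}^{(k)}$ and $\V_{\bigcdot}^{(k)}$: these carry no randomness (a degenerate point mass at $\0$) and so must be excluded from the density, which happens automatically because $||\U_{\bigcdot}^{(k)}||_F^2=\sum_{i:\R[i,k]=1}||\U_i^{(k)}||_F^2$ and similarly for $\V_{\bigcdot}^{(k)}$. The second is checking that every omitted normalizing constant---the Gaussian factors raised to the appropriate counts of free entries---depends only on the fixed quantities $\M$, $\N$, $\R$, $\C$, and the $\tau_k^2$ (hence the $\lambda_k$), and not on the optimization variables, so that they may legitimately be absorbed into $C$. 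I would also note explicitly that the global factor $\tfrac12$ is harmless: it rescales the objective without moving its minimizer, so the proportionality to the joint density still identifies the minimizer of~\eqref{obj_eq_sep} with the posterior mode.
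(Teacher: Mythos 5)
Your proof is correct and is precisely the routine computation the paper treats as immediate: the paper gives no separate proof of this proposition, asserting only that the result is ``apparent'' from the form of~\eqref{obj_eq_sep}. Your factorization of the joint density, your handling of the blocks fixed at $\0$ by~\eqref{Ustuff} and~\eqref{Vstuff} as point masses contributing nothing, and your observation that the negative log joint density equals one half of~\eqref{obj_eq_sep} plus a constant (so the posterior mode coincides with the minimizer) supply exactly the bookkeeping the paper leaves implicit.
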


\section{Missing data imputation}
\label{missing}
The probabilistic formulation of the objective described in Section~\ref{bayesian} motivates a modified Expectation-Maximization (EM)-algorithm approach to impute missing data.  Let $\mathcal{M}$ index observations in the full dataset $\X_{\bigcdot \bigcdot}$ that are unobserved: $\mathcal{M}=\{(m,n):\X_{\bigcdot \bigcdot}[m,n] \text{ is missing}\}$.  Our iterative algorithm for missing data imputation proceeds as follows:
\begin{enumerate}
\item Initialize $\hat{\X}_{\bigcdot \bigcdot}$ by
\[\hat{\X}_{\bigcdot \bigcdot}[m,n] = \begin{cases}
 \X_{\bigcdot \bigcdot}[m,n]	 \text{ if } (m,n) \notin \mathcal{M} \\
 0	 \text{ if } (m,n) \in \mathcal{M}
 \end{cases}
\] 	
\item M-step: Estimate $\SolHat$ by optimizing \eqref{obj_eq} for $\hat{\X}_{\bigcdot \bigcdot}$.
\item E-step: Update $\hat{\X}_{\bigcdot \bigcdot}$ by $\hat{\X}_{\bigcdot \bigcdot}[m,n] = \sum_{k=1}^K \hatSk[m,n]$. 
\item Repeat steps 2.\ and 3.\ until convergence.
\end{enumerate}
  
Analogous approaches to imputation for other low-rank factorization techniques have been proposed \citep{kurucz2007methods,o2019linked,park2019integrative,mazumder2010spectral}.  Due to centering, initializing missing data to $0$ in step 1.\ is equivalent to starting with mean imputation, which is used by other SVD-based imputation approaches \citep{mazumder2010spectral,kurucz2007methods}; however, in practice random initializations can also be used.  The M-step maximizes the joint density for the model in Proposition~\ref{prop4}, where $\hat{\S}_{\bigcdot\bigcdot}^{(k)}=\hat{\U}_{\bigcdot}^{(k)} \hat{\V}_{\bigcdot}^{(k) T}$ for $k=1,\hdots,K$.  The E-step updates the log joint density with its conditional expectation over $\{\X_{\bigcdot \bigcdot}[m,n]: (m,n) \in \mathcal{M}\}$, by an argument analogous to that in \citet{zhang2005using}.  Thus the approach is an EM-algorithm for maximum a posteriori imputation under the model in Section~\ref{bayesian}, and it is also a direct block coordinate descent of objective~\eqref{obj_eq} over $\Sol$ and $\{\X_{\bigcdot \bigcdot}[m,n]: (m,n) \in \mathcal{M}\}$.      
  Crucially for our  context, the method can be used to impute data that may be missing from an entire column or an entire row of each $\X_{ij}$, or in certain cases can even be used to impute an entire matrix $\X_{ij}$ based on joint structure.    

\section{Application to TCGA data}
\label{application}

\subsection{Data acquisition and preprocessing}
\label{data}

Our data were curated for the TCGA Pan-Cancer Project and were used for the pan-cancer clustering analysis described in \cite{hoadley2018cell}.    We used data from four ($I=4$) omics sources: (1) batch corrected RNA-Seq data capturing (mRNA) expression for $20531$ genes, (2) batch corrected miRNA-Seq data capturing expression for $743$ miRNAs, (3) between-platform normalized data from the Illumina 27K and 450K platforms capturing DNA methylation levels for $22601$ CpG sites, and (4) batch-corrected reverse-phase protein array data capturing abundance for $198$ proteins.  These data are available for download at \url{https://gdc.cancer.gov/about-data/publications/PanCan-CellOfOrigin} [accessed 11/19/2019].  We consider data for $N=6,973$ tumor samples from different individuals with all four omics sources available; these tumor samples represent $J=29$ different cancer types, listed in Table~\ref{canc_types}.      

\begin{table}[htbp]
\centering
\caption{TCGA acronyms for the 29 different cancer types considered.}\label{canc_types}
\begin{tabular}{|c p{5cm}| c p{5cm}|}
\hline
Acronym & Cancer type & Acronym & Cancer type \\
  \hline
ACC	&Adrenocortical carcinoma & BLCA&	Bladder urothelial carcinoma   \\
BRCA&	Breast invasive carcinoma & CESC&	Cervical carcinoma \\
CHOL&	Cholangiocarcinoma & CORE&	Colorectal adenocarcinoma  \\
DLBC&	Diffuse large B-cell lymphoma  & ESCA&	Esophageal carcinoma \\
HNSC&	Head/neck squamous cell  &  KICH&	Kidney chromophobe\\
KIRC&	Kidney renal clear cell  & KIRP&	Kidney renal papillary cell   \\
LGG&	Brain lower grade glioma &  LIHC&	Liver hepatocellular carcinoma \\
LUAD&	Lung adenocarcinoma &  LUSC&	Lung squamous cell carcinoma  \\
MESO&	 Mesothelioma & OV&		Ovarian cancer \\
PAAD&	Pancreatic adenocarcinoma & PCPG&	Pheochromocytoma and paraganglioma  \\
PRAD&	Prostate adenocarcinoma & SARC&	Sarcoma \\
SKCM&	Skin cutaneous melanoma & STAD&	Stomach adenocarcinoma \\
TGCT&	Testicular germ cell tumors & THCA&	Thyroid carcinoma \\
THYM&	Thymoma & UCEC&	Uterine corpus endometrial carcinoma \\
UCS&	Uterine carcinosarcoma & &\\
\hline
        \end{tabular}
\end{table}

We log-transformed the counts for the RNA-Seq and miRNA-Seq sources. To remove baseline differences between cancer types, we center each data source to have mean $0$ across all rows for each cancer type:
\[\mbox{mean}(\X_{ij}[m,\bigcdot]) =0 \; \text{ for all } i, j, m.\]   
We filter to the $1000$ genes and the $1000$ methylation CpG probes that have the highest standard deviation after centering, leaving $M_1=1000$ genes, $M_2=743$ miRNAs, $M_3=1000$ CpGs, and $M_4=198$ proteins.  Lastly, to account for differences in scale, we standardize so that each variable has standard deviation $1$:
 \[\mbox{SD}(\X_{i\bigcdot}[m,\bigcdot]) =1 \; \text{ for all } i, m.\] 
This is a conservative alternative to scaling by an estimate of the noise variance, as mentioned in Section~\ref{scaling}.
 
 \subsection{Factorization results}
\label{facdata}

We apply the BIDIFAC+ method to the complete-case data with $I=4$ omics sources and $J=29$ cancer types.  We simultaneously estimate a maximum of $K=50$ low-rank modules; all modules are  non-zero, but the variation explained by the smaller modules are negligible. Figure~\ref{modulevar} gives the total variance explained by each module, $||\hatSk||_F^2$, for $k=1,\hdots,50$ in decreasing order.  The top $15$ modules ordered by total variance explained are given in Table~\ref{modules}, and all $50$ modules are given in the Supplemental Spreadsheet S1. 
The first module explains global variation, with all cancer types and all omics sources included.  Other modules that explain substantial variability across all or almost all cancer types are specific to each omics source: miRNA (Module 2), methylation (Module 3), gene expression (Module 5) and Protein (Module 8).

\begin{figure}[!h]
\label{modulevar}
\includegraphics[scale=0.63]{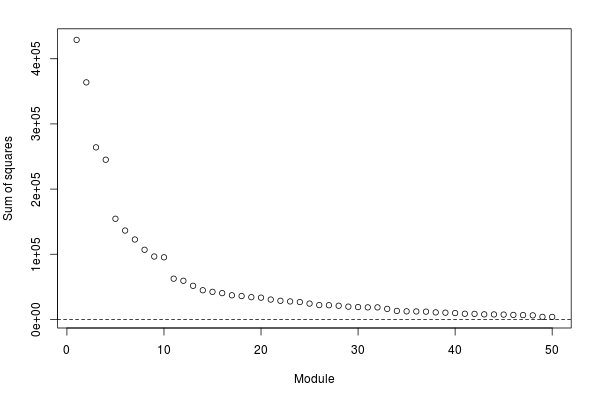}
\caption{Total sum of squared entries in each of the $50$ modules, ordered from largest to smallest.}
\end{figure}

\begin{table}[htbp]
\centering
\caption{Cancer types and sources for the first $15$ modules, ordered by variation explained.}\label{modules}
\begin{tabular}{|cc|p{7.8cm}| p{3.8cm}|}
\hline
Module & Rank & Cancer types & Omics sources \\
  \hline
1& 37&All cancers & mRNA miRNA Meth Protein  \\
2&	25 &All cancers & miRNA \\
3& 22	&BLCA BRCA CESC CHOL CORE DLBC ESCA HNSC LIHC LUAD LUSC OV PAAD PRAD SKCM STAD TGCT UCEC UCS & 	Meth  \\
4& 10	&ACC BLCA CHOL CORE DLBC ESCA HNSC KICH KIRC KIRP LGG LIHC LUAD LUSC MESO PAAD PCPG SARC SKCM STAD THCA THYM &	mRNA Meth\\
5&	24&All cancers  & mRNA\\
6&	17&BRCA & mRNA miRNA Meth Protein  \\
7&	15&LGG &  mRNA miRNA Protein \\
8&	20&All cancers *but* LGG &  Protein \\
9&	15 & THCA &		mRNA miRNA Protein \\
10&	20 &All cancers *but* LGG and TGCT & miRNA \\
11&	15 &CHOL KIRC KIRP LIHC & mRNA miRNA Meth Protein \\
12&	34 &LGG & Meth \\
13&	20 &BLCA CESC CORE ESCA HNSC LUSC SARC STAD & mRNA miRNA Meth Protein \\
14&	8 &KICH KIRC KIRP & mRNA miRNA Protein\\
15&	21 &BLCA BRCA CESC CHOL ESCA HNSC LUAD LUSC PAAD PRAD SKCM STAD TGCT UCEC UCS& mRNA miRNA\\
\hline
        \end{tabular}
\end{table}

The module that explains the fourth most variation (Module 4) identifies structure in the genes and DNA methylation that explains variation in $22$ of the $29$ cancer types; we focus on this module as an illustrative example.  The cancer types *not* included in Module 4 are BRCA (breast), CESC (cervical), OV (ovarian), PRAD (prostate), TGCT (testicular), UCEC (uterine endometrial), and UCS (uterine).  Interestingly, all tumor types that were excluded were cancers specific to either males or females (or heavily skewed in BRCA); while cancer types included have both sexes.  Figure~\ref{fig:gender} shows that Module 4 is indeed dominated by a single component that corresponds to molecular differences between the sexes. The gene loadings for this component are negligible except for those on the Y chromosome and two genes on the X chromosome that are responsible for X-inactivation in females: \emph{XIST} and \emph{TSIX}; the methylation loadings are negligible except for those in the X chromosome.  These results are an intuitive illustration of the method, revealing a multi-omic molecular signal that explains heterogeneity in some cancer types, but not all cancer types (only those that have both males and females).  

\begin{figure}[!h]
\includegraphics[scale=0.60]{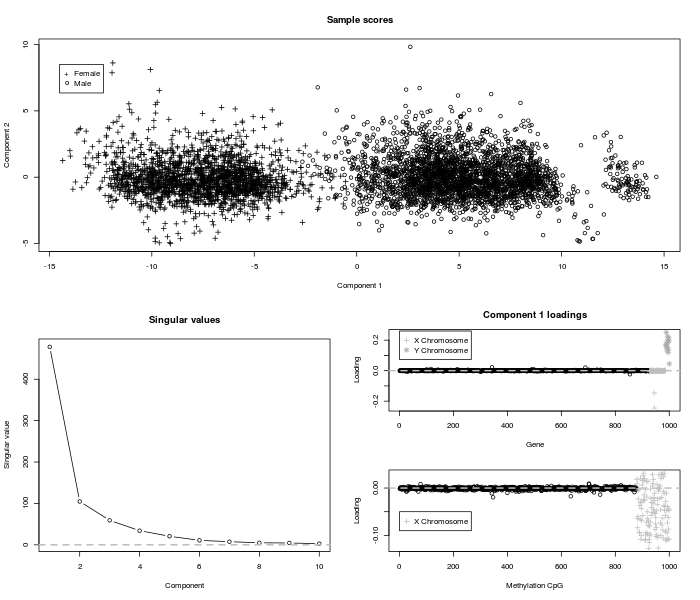}
\caption{For Module 4: Sample scores for the first two components (top), scree plot of singular values (bottom left), and loadings on genes and methylation CpGs (bottom right) for the first component.  This module includes $22$ cancer types with samples from both sexes, and it is dominated by molecular signals that distinguish males from females.}
\label{fig:gender}
\end{figure}

The module that explains the sixth most variation (Module 6) identifies structure across all four omics sources that explains variation in the breast cancer (BRCA) samples only.  Figure~\ref{fig:breast} shows that the first two components in this module are driven primarily by distinctions between the PAM50 molecular subtypes for BRCA \citep{cancer2012comprehensive}.  Thus, our analysis suggests that molecular signals that distinguish these subtypes are present across all four omics sources, but that these signals do not explain substantial variation within any other type of cancer considered. 

\begin{figure}[!h]
\includegraphics[scale=0.60]{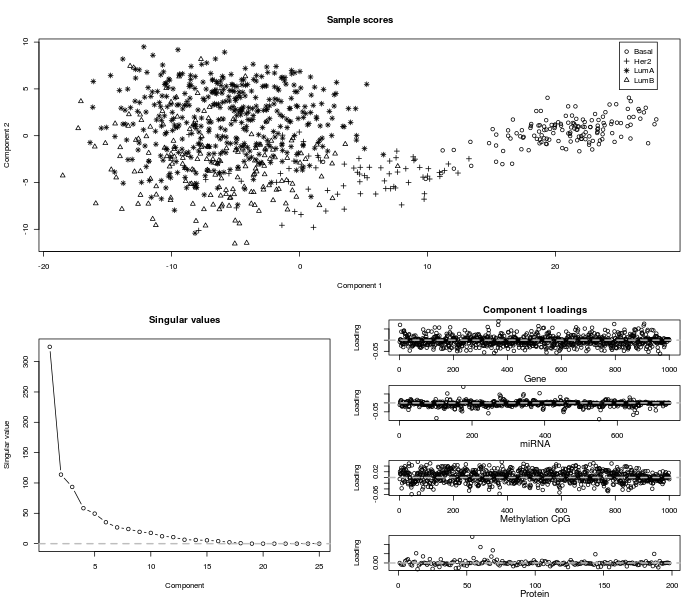}
\caption{For Module 6: Sample scores for the first two components (top), scree plot of singular values (bottom left), and loadings for all four omics platforms  (bottom right).  This module includes only breast (BRCA) tumor samples, and it is dominated by molecular signals that distinguish the PAM50 subtypes (Bsasal, Her2, LumA and LumB).}
\label{fig:breast}
\end{figure}

Several other modules explain variability in just one type of type of cancer, including LGG (Module 7: mRNA, miRNA and Protein), THCA (Module 9), UCEC (Module 16), and PRAD (Modules 18 and 19). Module 12, which is specific to LGG methylation, reveals distinct clustering by mutation status of the \emph{IDH} genes (see Figure~\ref{fig_lgg}). IDH mutations have been shown to lead to a distinct CpG-island hyper-methylated phenotype \citep{noushmehr2010identification}. Other modules explain variability in multiple cancer types that share similarities regarding their origin or histology.  For example, Module 14 explains variability within the three kidney cancers (KICH, KIRC, and KIRP), and digestive and gastrointestinal cancers (CORE, ESCA, PAAD, STAD) are represented in Modules 25 (methylation) and 28 (mRNA).  
\begin{figure}[!h]
\centering
\includegraphics[scale=0.67]{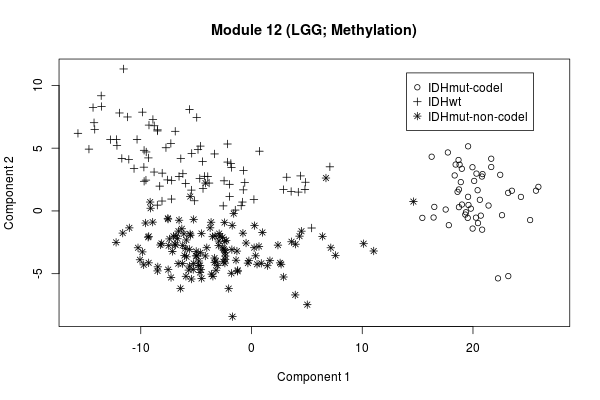}
\caption{Scores for the first two components of Module 12 (LGG; methylation), with symbols and colors showing separation by IDH mutation status (wild-type, IDH mutant, and IDH mutant with co-deletion).}
\label{fig_lgg}
\end{figure}

To assess sensitivity of the results to the maximum number of modules $\tilde{K}=50$, we also performed the decomposition with $\tilde{K}=25$, $30$, $35$, $40$, or $45$.  The resulting decompositions share several similarities. Nine of the $15$ modules in Table~\ref{modules} have a precise match in each of the $6$ decompositions (Modules 1, 2, 4, 5, 6, 7, 9, 12, 14).  Entrywise correlations for the terms $\Sk$ between matched modules ranged from $r=0.88$ to $r=0.99$.  Other modules had slight variations to the cancer types and omics sources included. The modules for different $\tilde{K}$ are provided as separate tabs in Supplemental Spreadsheet S1.%
 The overall structural approximations $\sum_{k=1}^{\tilde{K}} \Sk$ were very similar across decompositions, with entrywise correlations all greater than $r=0.99$.

 \subsection{Missing data imputation}
\label{misdata}
To assess the accuracy of missing data imputation using BIDIFAC+, we hold-out observed entries, rows, and columns of each dataset in the pan-omics pan-cancer and impute them using the approach in Section~\ref{missing}.  We randomly set $100$ columns (samples) to missing for each of the $4$ omic platforms, and we randomly set $100$ rows (features) to missing for each of the  $29$ cancer types.  We then randomly set $5000$ of the values remaining in the joint matrix $\X_{\bigcdot \bigcdot}$ to missing.  We impute missing values using BIDIFAC+ as described in Section~\ref{missing}, and for comparison we use an analogous approach to imputation using four other low-rank factorizations: (1) soft-threshold (nuclear norm) SVD of the joint matrix $\X_{\bigcdot \bigcdot}$, (2) soft-threshold SVD of each matrix $\X_{ij}$ separately, (3) hard-threshold SVD (SVD approximation using the first $R$ singular values) of $\X_{\bigcdot \bigcdot}$, (4) hard-threshold SVD of each $\X_{ij}$ separately.  For the soft-thresholding SVD methods, the penalty factor is estimated by random matrix theory as in Section~\ref{objective}.  For the hard-thresholding methods the ranks are determined by cross-validation by minimizing imputation error on an additional held-out cohort of the same size.  

We consider the imputation error under the different methods, broken down by (1) observed values, (2) values that are missing but have the rest of their row and column present (entrywise missing), (3) values that are missing their entire row, (4) values that are missing their entire column, and (5) values that are missing both their row and their column.  For a given set of values $\mathcal{M}$, we compute the relative squared error as
\[\mbox{RSE} = \frac{\sum_{(m,n) \in \mathcal{M}} (\X_{\bigcdot \bigcdot}[m,n]-\hat{\X}_{\bigcdot \bigcdot}[m,n])^2}{\sum_{(m,n) \in \mathcal{M}} \X_{\bigcdot \bigcdot}[m,n]^2},\]
where $\hat{\X}_{\bigcdot \bigcdot}$ is the structural approximation resulting from the given method.  Table~\ref{tbl:impute_acc} gives the RSE for each method and for each missing condition.  Imputation by BIDIFAC+ outperforms the other methods for each type of missingness, illustrating the advantages of decomposing joint and individual structures. The hard-thresholding approaches have much less error for the observed data than for the missing data, due to over-fitting of the signal. 

 \begin{table}[htbp]
\centering
\caption{Imputation RSE under different approaches and different types of missingness.}\label{tbl:impute_acc}
\begin{tabular}{|r|c c c c c|}
  \hline
  Method & Observed & Entrywise & Row & Column & Both \\
\hline
 BIDIFAC+ & $0.510$ & $0.558$ & $0.670$ &  $0.807$ & $0.881$  \\
 Soft-SVD (joint) & $0.531$ & $0.621$ & $0.678$ & $0.834$ & $0.894$ \\
Soft-SVD (separate) & $0.564$ & $0.610$ & $1.000$ & $1.000$ & $1.000$  \\
Hard-SVD (joint) & $0.431$ & $0.559$ & $0.829$ & $0.908$ & $1.200$  \\
Hard-SVD (separate) & $0.344$ & $0.581$ & $1.000$ & $1.000$ & $1.000$  \\

        \hline
        \end{tabular}
\end{table}
 
 \section{Simulation studies}
\label{simulation}

\subsection{Vertically linked simulations}
\label{vert_sim} 
We conduct a simulation study to assess the accuracy of the BIDIFAC+ decomposition in the context of vertical integration, where there is a single shared column set $(J=1)$.  For all scenarios, we simulate data according to model~\eqref{slide} wherein the entries of the residual noise $\E_{\bigcdot \bigcdot}$ are generated independently from a Normal$(0,1)$ distribution and the entries of each $\U_i^{(k)}$ and $\V^{(k)}$ are generated independently from a Normal$(0,\sigma^2)$ distribution. 

We first consider a scenario with $I=3$ matrices, each of dimension $100 \times 100$ ($N=100$ and $M_1=M_2=M_3=100$), with low-rank modules that are shared jointly, shared across each pair of matrices, and individual to each matrix:
\begin{align} \R = \left [ \begin{array}{c c c c c c c}   1 & 1 & 1 & 0 & 1 & 0 & 0  \\  1 & 1 & 0 & 1 & 0 & 1 & 0 \\ 1 & 0 & 1 & 1 & 0 & 0 & 1 \end{array} \right ] . \label{eqR} \end{align}
We consider a ``low-rank" and a ``high-rank" condition across three different signal-to-noise levels. For the low-rank condition, each of the seven modules has rank $R=1$; for the high-rank condition, each module has rank $R=5$.  The variance of the factorized signal component, $\sigma^2$ is set to be $\sqrt{1/2}, 1,$ or $\sqrt{10}$, so that the signal-to-noise ratio (s2n) of each components is $1/2$, $1$, or $10$, respectively.

For each condition, we apply four approaches to uncover the underlying decomposition:
\begin{enumerate}
\item BIDIFAC+, with $\R$ given by~\eqref{eqR}, as in the true generative model,
\item BIDIFAC+, with $\R$ estimated,
\item SLIDE, with $\R$ and the true ranks of each module ($R=1$ or $R=5$) provided,
\item SLIDE, with $\R$ and the ranks of each module estimated via the default cross-validation scheme.
\end{enumerate}
We use SLIDE as the basis of comparison with BIDIFAC+, because it is the only other method that is designed to recover each term in the decomposition and it generally outperforms other vertically linked decomposition methods \citep{gaynanova2017structural, park2019integrative}.
For each case we compute the mean relative squared error (RSE) in recovering each module of the decomposition:
\begin{align} \text{RSE} = \frac{1}{K} \sum_{k=1}^K \frac{||\Sk - \hatSk||_F^2}{||\Sk||_F^2}. \label{RSE} \end{align}
The mean RSE for each condition and under each approach is shown in Table~\ref{tbl:bidi.slide}, broken down by the global module, pairwise modules, and individual modules.  BIDIFAC+ generally outperforms or performs similarly to SLIDE, even when the true ranks are used for the SLIDE implementation (the ranks are never fixed for BIDIFAC+). An exception is when the ranks and s2n ratio are small (rank=1, s2n=0.5), where BIDIFAC+ tends to over-shrink the signal.  BIDIFAC+ performs particularly well relative to SLIDE when the rank is large and s2n is high.  One likely reason for this improvement is that the SLIDE model necessarily restricts the factorized components $\U_i^{(k)}$ and $\V^{(k)}$ to be mutually orthogonal, whereas BIDIFAC+ has no such constraint.  This restriction can be limiting when decomposing generated signals that are independent but not orthogonal \citep{park2019integrative}.  Moreover, when estimating the ranks the SLIDE model can drastically underperform relative to using the true ranks. The results for BIDIFAC+ when fixing the true modules $\R$ vs. estimating $\R$ are nearly identical; because all possible modules are present for this scenario, the two approaches are very similar despite subtle differences in the algorithms.

We consider another scenario with a larger number of matrices ($I=10$), each of dimension $100 \times 100$ ($N=100$, $M_1=\cdots=M_{10}=100)$ and sparsely distributed modules.  We generate $10$ low rank modules out of $2^{10}-1=1023$ possibilities, that are present on (1) $X_{11}$ only, (2) $X_{11}$ and $X_{21}$, (3) $X_{11}$, $X_{21}$, and $X_{31}$, etc.  We again consider low-rank ($R=1$) and high-rank ($R=5$) scenarios for all modules, and three signal-to-noise levels $0.5, 1,$ and $10$.  The resulting mean RSE~\eqref{RSE} over all modules, for each approach, is shown in Table~\ref{tbl:bidi.slide2}.  Here, BIDIFAC+ with fixed true $\R$ generally performs better than estimating $\R$; however, these gains are modest for most scenarios, suggesting the BIDIFAC+ generally does a good job of identifying which of the $1023$ possible modules are non-zero.

\begin{table}[htbp]
\caption{Comparison of BIDFAC+ and SLIDE signal decomposition RSE ($I=3$ sources).}\label{tbl:bidi.slide}
\begin{tabular}{|cl|rr|rr|}
  \hline
  & & \multicolumn{2}{c|}{BIDIFAC+} & \multicolumn{2}{c|}{SLIDE} \\
Scenario &  Structure &  True $\R$ & Estimated $\R$  & True ranks  & Estimated ranks \\
\hline
  & Global & $0.130$ & $0.130$& $0.120$ & $0.120$   \\
Rank=$1$, s2n=$0.5$        & Pairwise & $0.157$ & $0.156$& $0.103$ & $0.103$ \\
        & Individual &  $0.197$ & $0.197$& $0.118$ & $0.118$  \\
  \hline
  & Global & $0.060$ & $0.060$& $0.084$ & $0.084$   \\
Rank=$1$, s2n=$1$        & Pairwise & $0.068$ & $0.068$& $0.053$ & $0.053$ \\
        & Individual &  $0.070$ & $0.070$& $0.048$ & $0.048$  \\
        \hline
& Global & $0.010$ & $0.010$& $0.035$ & $3.65$   \\
Rank=$1$, s2n=$10$        & Pairwise & $0.005$ & $0.005$& $0.027$ & $1.00$ \\
        & Individual &  $0.008$ & $0.008$& $0.037$ & $0.689$  \\
          \hline
            & Global & $0.270$ & $0.270$& $0.276$ & $0.869$   \\
Rank=$5$, s2n=$0.5$        & Pairwise & $0.268$ & $0.268$& $0.263$ & $0.460$ \\
        & Individual &  $0.329$ & $0.329$& $0.306$ & $0.317$  \\
  \hline
  & Global & $0.123$ & $0.123$& $0.232$ & $1.320$   \\
Rank=$5$, s2n=$1$        & Pairwise & $0.121$ & $0.121$ & $0.189$ & $0.674$ \\
        & Individual &  $0.148$ & $0.148$& $0.241$ & $0.485$  \\
        \hline
& Global & $0.080$ & $0.080$& $0.233$ & $2.36$   \\
Rank=$5$, s2n=$10$        & Pairwise & $0.060$ & $0.060$& $0.189$ & $0.917$ \\
        & Individual &  $0.089$ & $0.089$& $0.249$ & $0.703$  \\
          \hline
        \end{tabular}
\end{table}
\begin{table}[htbp]
\centering
\caption{Comparison of BIDFAC+ and SLIDE signal decomposition RSE ($I=10$ sources).}\label{tbl:bidi.slide2}
\begin{tabular}{|cl|rr|rr|}
  \hline
  & & \multicolumn{2}{c|}{BIDIFAC+} & \multicolumn{2}{c|}{SLIDE} \\
Ranks &  s2n &  True $\R$ & Estimated $\R$  & True ranks  & Estimated ranks \\
\hline
 $1$ & $0.5$ & $0.150$ & $0.150$ &  $0.116$ & $0.116$  \\
 $1$ & $1$ & $0.076$ & $0.078$ & $0.105$ & $0.105$ \\
 $1$ & $10$ & $0.032$ & $0.025$ & $0.060$ & $0.060$  \\
 $5$ & $0.5$ & $0.297$ & $0.320$ & $0.402$ & $0.685$  \\
 $5$ & $1$ & $0.177$ & $0.189$ & $0.324$ & $0.603$  \\
 $5$ & $10$ & $0.167$ & $0.245$ & $0.347$ & $0.347$  \\
        \hline
        \end{tabular}
\end{table}

\subsection{Missing data simulation}
\label{missing_sim} 

Here we assess the performance of missing data imputation for a $3 \times 3$ grid of matrices ($I=J=3$) with $\X_{ij}: 100 \times 100$ for each $i,j$.  Data $\X_{\bigcdot \bigcdot}$ are generated as in model~\eqref{bidiflex}, with one fully shared module $(\R[\bigcdot,1]= \C[\bigcdot,1]=[1,1,1])$ and modules specific to each of the $9$ matrices $(\R[\bigcdot,1]= \C[\bigcdot,1]=[1,0,0]$, etc.).  All modules have rank $5$.  The entries of residual noise are generated independently from a Normal$(0,1)$ distribution, and the entries of each $\U_i^{(k)}$ and $\V_i^{(k)}$ are generated independently from a Normal$(0,\sigma_k^2)$ distribution.  We consider different levels of the joint signal strength $\sigma_j^2:=\sigma_1^2$ and the matrix-specific (i.e., individual) signal strength $\sigma_i^2:=\sigma_2^2=\cdots=\sigma_{10}^2$.  We further consider scenarios with different kinds of missingness: (1) $1/9$ of the entries in $\X_{\bigcdot \bigcdot}$ missing at random, (2) $1/9$ of the columns in each $\X_{ij}$ entirely missing at random, and (3) one of the nine datasets $\X_{ij}$ entirely missing. 

For each scenario, we impute missing values using the same approaches used in Section~\ref{misdata}.  For hard SVD imputation we use the true rank of the underlying structure. We consider three versions of BIDIFAC+ imputation: (1) initializing by setting missing values to $0$, (2) initializing by generating missing values randomly from a $N(0,1)$ distribution, and (3) initializing at $0$ and fixing the modules $\R$ and  $\C$ to match the data generation (i.e., one fully shared module and one module specific to each matrix).  The resulting RSEs for the missing signal,
\[\mbox{RSE} = \frac{\sum_{(m,n) \in \mathcal{M}} \left(\sum_{k=1}^{10}\S^{(k)}_{\bigcdot \bigcdot}[m,n]-\hat{\X}_{\bigcdot \bigcdot}[m,n]\right)^2}{\sum_{(m,n) \in \mathcal{M}} \left(\sum_{k=1}^{10}\S_{\bigcdot \bigcdot}^{(k)}[m,n]\right)^2},\] 
are shown in Table~\ref{tbl:sim:impute_acc}, averaged over $100$ replications.   Imputation by BIDIFAC+ is flexible and performs relatively well across the different scenarios.  As expected, the accuracy of column-wise and block-wise imputation depends strongly on the relative strength of the joint signal.  Initializing missing values to $\0$ or randomly generated values gave very similar results, suggesting that either approach is reasonable in practice.  BIDIFAC+ using the true $\R$ and $\C$ is comparable to  estimating $\R$ and $\C$ across most scenarios.  An exception is for block-missing data (i.e., missing an entire $\X_{ij}$);  in this context, there is some ambiguity on whether shared signals defined by $\hat{\R}$ and $\hat{\C}$ include the missing block, and so the performance with the true $\R$ and $\C$ is slightly better.  Also, as for other blocks, values for the missing block are imputed under the assumption of error variance $1$ and mean $0$; in practice they cannot be transformed back to their original measurement scale without prior knowledge of the mean and error variance.

 \begin{table}[htbp]
\centering
\caption{Imputation RSE under different approaches, for different types of missingness and joint and individual signal strengths ($\sigma^2_j$ and $\sigma^2_i$ respectively).}\label{tbl:sim:impute_acc}
\begin{tabular}{|r|c c c |c c c| c c c|}
\hline
  & \multicolumn{3}{c|}{Entry missing} & \multicolumn{3}{c|}{Column missing} & \multicolumn{3}{c|}{Block missing} \\
  Joint signal ($\sigma^2_j$) & $\sqrt{10}$ & 1 & 1 & $\sqrt{10}$ & 1 & 1 & $\sqrt{10}$ & 1 & 1 \\
  Individual signal ($\sigma^2_i$) & 1 & 1 & $\sqrt{10}$ & 1 & 1 & $\sqrt{10}$ & 1 & 1 & $\sqrt{10}$ \\
  \hline 
   BIDIFAC+ & 0.01 & 0.06 & 0.01 & 0.11 & 0.56 & 0.93 & 0.16 & 0.57 & 0.94\\
   BIDIFAC+ (random init) & 0.01 & 0.06 & 0.01 & 0.11 & 0.56 & 0.93 & 0.16 & 0.57 & 0.94\\
   BIDIFAC+ (true $\R$, $\C$) & 0.01 & 0.06 & 0.01 & 0.10 & 0.55 & 0.92 & 0.10 & 0.52 & 0.93\\
    Soft-SVD (joint) & 0.03 & 0.15 & 0.04 & 0.11 & 0.58 & 0.94 & 0.10 & 0.53 & 0.93 \\
Soft-SVD (separate)& 0.02 & 0.09 & 0.02 & 1.00 & 1.00 & 1.00 & 1.00 & 1.00 & 1.00\\
Hard-SVD (joint) & 0.01 & 0.06 & 0.01 & 0.99 & 2.02 & 0.99 & 1.26 & 2.15 & 1.04\\
Hard-SVD (separate) & 0.01 & 0.03 & 0.01 & 1.00 & 1.00 & 1.00 & 1.00 & 1.00 & 1.00\\
\hline

        \hline
        \end{tabular}
\end{table}

\subsection{Application-motivated simulation}
\label{app_sim}

Here we assess the recovery of the underlying structure and the accuracy of the decomposition into shared components for a bidimensionally linked scenario that reflects our motivating application in Section~\ref{application}.  We generate data by taking the estimated decomposition from Section~\ref{facdata} and adding independent noise to it.  That is, we simulate \[\tilde{\X}_{\bigcdot \bigcdot} = \sum_{k=1}^{50} \alpha \hatSk + \tilde{\E}_{\bigcdot \bigcdot}\] where $\SolHat$ is the estimated decomposition from Section~\ref{facdata}, the entries of $\tilde{\E}_{\bigcdot \bigcdot}$ are independent $\mbox{Normal}(0,1)$, and $\alpha>0$ is a parameter that controls the total signal-to-noise ratio.  We consider three total signal-to-noise ratios, defined by \[\mbox{s2n}=\mbox{var}(\sum_{k=1}^{50} \alpha \hatSk)/\mbox{var}(\tilde{\E})= \mbox{var}(\sum_{k=1}^{50} \alpha \hatSk),\]
$\mbox{s2n}=0.2,0.5,$ and $5$.  The scenario with $\mbox{s2n}=0.5$ corresponds most closely to the real data, for which the ratio of the estimated signal variance over the residual variance is $0.552$.  For each scenario, we estimate the underlying decomposition using BIDIFAC+ with the true $\R$ and $\C$ fixed, and using BIDIFAC+ with estimated modules $\tilde{\R}$ and $\tilde{\C}$ and $\tilde{K}=50$.  In each case, we compute the RSE as follows
   \begin{align} \text{RSE} = \frac{1}{50} \sum_{k=1}^{50} \frac{||\tilSk - \alpha \hatSk||_F^2}{||\alpha\Sk||_F^2}. \label{RSE2}. \end{align}
   When computing RSE, we permute the $50$ modules so that $\tilde{\R}[\bigcdot,k]=\R[\bigcdot,k]$ and $\tilde{\C}[\bigcdot,k]=\C[\bigcdot,k]$ wherever possible, and set $\tilSk=\0$ if $\tilde{\R}[\bigcdot,k]\neq\R[\bigcdot,k]$ and $\tilde{\C}[\bigcdot,k]\neq\C[\bigcdot,k]$.   We also compute the relative overall signal recovery (ROSR) as 
     \begin{align} \text{ROSR} =  \frac{||\sum_{k=1}^K \hatSk - \sum_{k=1}^K\alpha \hatSk||_F^2}{||\sum_{k=1}^K \alpha\Sk||_F^2}. \label{ROSR} \end{align} 
     
     The results are shown in Table~\ref{tbl:rec_acc}, and demonstrate that the underlying decomposition is recovered reasonablly well in most scenarios.  However, the RSE for estimated modules is often substantially more than the RSE using the true modules, as the row and column sets defining the modules can be estimated incorrectly.  Moreover, the overall signal recovery error (ROSR) is generally substantially less than the mean error in recovering each module (RSE), demonstrating how the decomposition can be estimated incorrectly even if the overall signal is estimated with high accuracy.
      \begin{table}[htbp]
\centering
\caption{Relative squared error of the decomposition (RSE) and relative overall signal recovery (ROSR) using BIDIFAC+ with known modules ($\R$ and $\C$) and estimated modules ($\hat{\R}$ and $\hat{\C}$). }\label{tbl:rec_acc}
\begin{tabular}{|r|c c c c|}
  \hline
  s2n & RSE($\R, \C$) & RSE ($\tilde{\R}, \tilde{\C}$) & ROSR ($\R, \C$) & ROSR ($\tilde{\R}, \tilde{\C}$)   \\
\hline
$0.2$ & $0.356$ & $0.531$ & $0.170$ &  $0.189$ \\
$0.5$ & $0.242$ & $0.386$ & $0.131$ & $0.143$ \\
$5$ & $0.128$ & $0.346$ & $0.012$ & $0.026$  \\

        \hline
        \end{tabular}
\end{table}

\section{Discussion}
\label{discussion}
The successful integration of multiple large sources of data is a pivotal challenge for many modern analysis tasks.  While several approaches have been developed, they largely do not apply to the context of bidimensionally linked matrices.  BIDIFAC+ is a flexible approach for dimension reduction and decomposition of shared structures among bidimensionally linked matrices, which is competitive with alternative methods that integrate over a single dimension (rows or columns).  Here we have focused primarily on the accuracy of the estimated decomposition and exploratory analysis of the results.  BIDIFAC+ may also be used for other tasks, such as missing data imputation or as a dimension reduction step preceding statistical modeling (e.g., as in principal components regression).  For these other tasks it is desirable to model statistical uncertainty, and fully Bayesian extensions that capture the full posterior distribution about the mode in Section~\ref{bayesian} are potentially very useful. Moreover, while we have explored the uniqueness of the decomposition under BIDFAC+, it is worthwhile to establish conditions that are both necessary and sufficient for its identifiability. 

An implicit assumption of the BIDIFAC+ framework is that shared structures are present over complete submatrices.  However, it is conceivable that  structured variation may take other partially shared forms. For example, a pattern of variation in DNA methylation may exist across several cancer types but only regulate gene expression in some of those cancer types.  Allowing for such shared structures, that do not exist over a complete submatrix, is an interesting direction of future work for which the separable form of the objective~\eqref{obj_eq_sep} may be useful.  Moreover, the approach may be extended to account for different layers of granularity in the row and column sets; for example, it would be interesting to also identify shared and specific patterns of variability among known subtypes within each cancer type.  We select penalty terms by random matrix theory, which requires assumptions of independent error and homogenous variance;  an alternative strategy is to use the imputation accuracy of held-out data as a metric for parameter selection. 
 
Computing time for BIDIFAC+ can range from $<5$ minutes for each simulation in Sections~\ref{vert_sim} and~\ref{missing_sim} to $\approx 24$ hours until convergence for the pan-omics pan-cancer application in Section~\ref{application} and the accompanying simulation in Section~\ref{app_sim}.  Thus, computational feasibility must be considered carefully or larger scale problems. 

Our application to pan-omics pan-cancer data from TCGA revealed molecular patterns that explain variability across all or almost all types of cancer, both across omics platforms and within each omics platform.  However, it also revealed patterns several instances in which patterns are specific to one or a small subset of cancers, and these often show sharp distinctions of previously known molecular subtypes (e.g., for BRCA and LGG). Interestingly, BRCA was the only tumor type that showed up with all four platforms in a module. Together, they strongly separated the Basal-like molecular subtype from other subtypes of breast cancer.  This mirrors the analysis of individual data types in \cite{cancer2012comprehensive}.  The LGG data also split by both histological groups and mutation status based on BIDFAC+, even though both were not included in the analysis. Module 7 included mRNA, miRNA, and protein and was predominantly driven by co-deletion of 1p/19q which is predominantly observed in oligodendrogliomas and is associated with better overall survival.  This mirrors the previous TCGA work that showed that the LGG could be predominately split by 1p/19q deletion, IHD1 status (Module 12, for methylation) or TP53 mutation status \citep{cancer2015comprehensive}.  An important insight provided by BIDIFAC+ is that these molecular distinctions are specific to BRCA and LGG, respectively, suggesting that similar phenomena do not account for heterogeneity within other types of cancers.  Other modules that are broadly shared across cancer types have potential to reveal relevant molecular signal that are undetectable within a single cancer type, especially those with smaller sample sizes. Beyond exploratory visualization, to systematically investigate the clinical relevance of the underlying modules, one can cluster their structure to identify novel subtypes analogous to the approach described in \citet{hellton2016integrative}. Furthermore, one can use the results in a predictive model for a clinical outcome, analogous to the approach described in \citet{kaplan2017prediction}.  We are currently pursuing the use of BIDIFAC+ results for these tasks.  

\section*{Availability}
    A GitHub repository for BIDIFAC+ is available at \url{https://github.com/lockEF/bidifac}, and R code for all analyses presented herein is provided in a supplemental zipped folder.
    
\section*{Acknowledgement}
    The authors gratefully acknowledge the support of NCI grant R21 CA231214, and the very helpful feedback of the Editor, AE, and four referees.

\appendix

\section{Module enumeration}
\label{ModEnum}

As the default representation of model (6) in the main article, set $K=(2^I-1)(2^J-1)$ and let $\R$ and $\C$ enumerate all possible modules as follows.  For $k=1,\hdots,K$, let $\R[\bigcdot,k]$ be the $I$-digit binary representation for $k \, \mbox{mod} \, (2^I-1)+1$, where $\mbox{mod}$ gives the modulo (remainder) operator.  For $k=1,\hdots,K$, let $\C[\bigcdot,k]$ give the $J$-digit binary representation for $\lceil k/(2^I-1) \rceil$, where $\lceil \cdot \rceil$ gives the ceiling operator.     

\section{Proofs} 
\label{proofs}


\subsection{Proof of Proposition 1}

\begin{proof}
Let $\SolHat \in \SolSpace$ be a minimizer of the objective function $f(\bigcdot)$. Assume a violation of condition 1., wherein $\lambda_{k'}\geq \lambda_k$. Consider another minimizer $\SolTil$, where $\widetilde{\bS}_{\bigcdot\bigcdot}^{(k)}=\bzero$ and $\widetilde{\bS}_{\bigcdot\bigcdot}^{(k')}=\widehat{\bS}_{\bigcdot\bigcdot}^{(k)}+\widehat{\bS}_{\bigcdot\bigcdot}^{(k')}$, and all other modules are equal. Then, using the triangle inequality,
\begin{align*}
f(\SolHat)-f(\SolTil)&=\lambda_k ||\widehat{\bS}_{\bigcdot\bigcdot}^{(k)}||_* +\lambda_{k'} ||\widehat{\bS}_{\bigcdot\bigcdot}^{(k')}||_*-\lambda_k||\widehat{\bS}_{\bigcdot\bigcdot}^{(k)}+\widehat{\bS}_{\bigcdot\bigcdot}^{(k')}||_*\\
&\geq \lambda_k ||\widehat{\bS}_{\bigcdot\bigcdot}^{(k)}||_* +\lambda_{k'} ||\widehat{\bS}_{\bigcdot\bigcdot}^{(k')}||_*-\lambda_k(||\widehat{\bS}_{\bigcdot\bigcdot}^{(k)}||_*+||\widehat{\bS}_{\bigcdot\bigcdot}^{(k')}||_*)\\
&\geq \lambda_k ||\widehat{\bS}_{\bigcdot\bigcdot}^{(k)}||_* +\lambda_{k'} ||\widehat{\bS}_{\bigcdot\bigcdot}^{(k')}||_*-\lambda_k(||\widehat{\bS}_{\bigcdot\bigcdot}^{(k)}||_*)-\lambda_{k'}||\widehat{\bS}_{\bigcdot\bigcdot}^{(k')}||_*)\\
&=0,
\end{align*}
and thus there is a solution in which module $k$ is $\0$, regardless of the data $\X_{\bigcdot \bigcdot}$.

Now assume a violation of condition 2., wherein $\lambda_k\geq \sum_{j\in\mathcal{I}_k}\lambda_{j}$. Let $\widehat{\S}^{(k)}=\sum_{j\in\mathcal{I}_k} \widehat{\S}_{\bigcdot\bigcdot}^{j \prime}$, where $\widehat{\S}_{\bigcdot\bigcdot}^{j \prime}$ contains the submatrix of $\widehat{\S}_{\bigcdot\bigcdot}^{(k)}$ corresponding to $\R[\bigcdot,j]$ and $\C[\bigcdot,j]$ and $\0$ otherwise.  Consider another decomposition $\SolTil$, where $\widetilde{\S}_{\bigcdot\bigcdot}^{(k)}=\0$ and $\widetilde{\S}_{\bigcdot\bigcdot}^{(j)}=\widehat{\S}_{\bigcdot\bigcdot}^{(j)}+\widehat{\S}_{\bigcdot\bigcdot}^{(j)\prime}$ for all $j\in \mathcal{I}_k$. Then,
\begin{align*}
    f(\SolHat)-f(\SolTil)&=\lambda_k||\widehat{\bS}_{\bigcdot\bigcdot}^{(k)}||_*+\sum_{j\in\mathcal{I}_k}\lambda_j||\widehat{\bS}_{\bigcdot\bigcdot}^{(j)}||_*-\sum_{j\in\mathcal{I}_k}\lambda_j ||\widehat{\bS}_{\bigcdot\bigcdot}^{(j)}+\widehat{\bS}_{\bigcdot\bigcdot}^{(j)\prime}||_*\\
    &\geq \lambda_k||\widehat{\bS}_{\bigcdot\bigcdot}^{(k)}||_*+\sum_{j\in\mathcal{I}_k}\lambda_j||\widehat{\bS}_{\bigcdot\bigcdot}^{(j)}||_*-\sum_{j\in\mathcal{I}_k}\lambda_j ||\widehat{\bS}_{\bigcdot\bigcdot}^{(j)}||_*-\sum_{j\in\mathcal{I}_k} \lambda_j ||\widehat{\bS}_{\bigcdot\bigcdot}^{(j)\prime}||_*\\
    &=\lambda_k||\widehat{\bS}_{\bigcdot\bigcdot}^{(k)}||_*-\sum_{j\in\mathcal{I}_k} \lambda_j ||\widehat{\bS}_{\bigcdot\bigcdot}^{(j)\prime}||_*\\
    &\geq \lambda_k||\widehat{\bS}_{\bigcdot\bigcdot}^{(k)}||_*-\sum_{j\in\mathcal{I}_k} \lambda_j ||\widehat{\bS}_{\bigcdot\bigcdot}^{(k)}||_*\\
    &\geq 0,
\end{align*}
and thus there is a solution in which module k is 0, regardless of the data $\X_{\bigcdot \bigcdot}$
\end{proof}

\subsection{Proof of Proposition 4}


\begin{proof}
We show that $\lambda_k=\sqrt{\R[\bigcdot,k]\cdot \M}+\sqrt{\C[\bigcdot,k]\cdot \N}$ satisfies the necessary conditions of Proposition~\ref{prop_nec}. For condition 1., note that 
$\sqrt{\R[\bigcdot,k]\cdot \M}+\sqrt{\C[\bigcdot,k]\cdot \N}>\sqrt{\R[\bigcdot,j]\cdot\M}+\sqrt{\C[\bigcdot,j]\cdot\N}$.

For condition 2., note that
\begin{align*}
\sum_{j\in\mathcal{I}_k} \sqrt{\R[\bigcdot,j]\cdot\M}+\sqrt{\C[\bigcdot,j]\cdot\N}&\geq \sqrt{ \sum_{j\in\mathcal{I}_k} \R[\bigcdot, j]\cdot \M}+\sqrt{\sum_{j\in\mathcal{I}_k} \C[\bigcdot,j]\cdot\N}\\
&=\sqrt{r\cdot \R[\bigcdot,k]\cdot \M}+\sqrt{c\cdot \C[\bigcdot, k]\cdot \N}\\
&> \sqrt{\R[\bigcdot,k]\cdot \M}+\sqrt{\C[\bigcdot, k]\cdot \N}    
\end{align*}
\end{proof}

\subsection{Proof of Proposition 5}

Lemmas~\ref{lem1} and \ref{lem2} below are used to establish Proposition 5.

\begin{lemma} \label{lem1} Take two decompositions  $\SolHat \in \SolSpace$ and $\SolTil \in \SolSpace$, and assume that both minimize the structured nuclear norm penalty: 
\[\fpen (\SolHat)=\fpen \left(\SolTil\right)=\underset{\SolSpace}{\min} \; \fpen  (\Sol).\]
Then, for any $\alpha \in [0,1]$,
\begin{align*}
||\alpha \hatSk+(1-\alpha)\tilSk||_* = \alpha ||\hatSk||_*+(1-\alpha) ||\tilSk||_*  
\end{align*}
for $k=1,\hdots,K$.
\end{lemma}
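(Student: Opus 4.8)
The plan is to exploit the convexity of both the feasible set $\SolSpace$ and the penalty $\fpen(\cdot)$, reducing the claim to an instance of equality in a convex inequality. First I would observe that $\SolSpace$ is convex: since $\sum_{k=1}^K \hatSk = \sum_{k=1}^K \tilSk = \hat{\X}_{\bigcdot \bigcdot}$ and the structural zero constraints are shared, the convex combination $\{\alpha \hatSk + (1-\alpha)\tilSk\}_{k=1}^K$ also sums to $\hat{\X}_{\bigcdot \bigcdot}$ and hence lies in $\SolSpace$ for every $\alpha \in [0,1]$. This is the only place where feasibility needs to be checked, and it follows immediately from linearity of the constraint $\sum_k \Sk = \hat{\X}_{\bigcdot \bigcdot}$.

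Next, since each weight $\lambda_k = \sqrt{\R[\bigcdot,k]\cdot \M} + \sqrt{\C[\bigcdot,k]\cdot \N}$ is strictly positive and the nuclear norm is convex, $\fpen(\cdot)$ is a convex function of the decomposition. Convexity gives
\[\fpen\big(\{\alpha \hatSk + (1-\alpha)\tilSk\}_k\big) \leq \alpha\, \fpen(\SolHat) + (1-\alpha)\, \fpen(\SolTil).\]
Because both $\SolHat$ and $\SolTil$ attain the minimum value $m := \underset{\SolSpace}{\min}\, \fpen(\Sol)$, the right-hand side equals $m$. On the other hand, the convex combination is itself a feasible point of $\SolSpace$, so its penalty is at least $m$. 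Hence the displayed inequality is actually an equality, i.e.\ the convex combination is also a minimizer of $\fpen(\cdot)$ over $\SolSpace$.

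The remaining step is to pass from this aggregate equality to the per-module statement. Writing out both sides and applying the triangle inequality together with positive homogeneity of the nuclear norm, for every $k$ we have
\[||\alpha \hatSk + (1-\alpha)\tilSk||_* \leq \alpha||\hatSk||_* + (1-\alpha)||\tilSk||_*.\]
Multiplying each such inequality by $\lambda_k > 0$ and summing over $k$ recovers exactly the two sides of the equality established in the previous paragraph; since the totals agree while each summand satisfies $\leq$, no summand can be strict. Dividing the resulting equality for index $k$ by $\lambda_k > 0$ then yields the claim for that $k$.

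I do not anticipate a genuine obstacle here: the argument is the standard ``equality in a convex inequality forces termwise equality'' maneuver. The only two points requiring care are (i) verifying that the convex combination remains feasible, handled in the first paragraph, and (ii) invoking $\lambda_k > 0$ so that termwise equality cannot be masked by a vanishing weight.
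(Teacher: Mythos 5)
Your proof is correct and follows essentially the same route as the paper's: use convexity of $\SolSpace$ and of $\fpen(\cdot)$ to conclude the convex combination is itself a minimizer, then observe that the aggregate equality combined with the termwise nuclear-norm convexity inequality forces equality in every term. Your write-up is in fact slightly more explicit than the paper's (you spell out why the convex combination attains the minimum, and you flag the role of $\lambda_k>0$ in the term-by-term conclusion), but the underlying argument is identical.
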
 

\begin{proof}
Because $\SolSpace$ is a convex space and $\fpen$ is a convex function, the set of minimizers of $\fpen$  over $\SolSpace$ is also convex.  Thus, 
\[\fpen \left(\{\alpha \hatSk + (1-\alpha) \tilSk\}_{k=1}^K \right)=\underset{\SolSpace}{\min} \; \fpen  (\Sol).\]
The result follows from the convex property of the nuclear norm operator, which implies that for any two matrices of equal size $\hat{\A}$ and $\tilde{\A}$, 
\begin{align}||\alpha \hat{\A} + (1-\alpha) \tilde{\A}||_* \leq \alpha ||\hat{\A}||_* + (1-\alpha)||\tilde{\A}||_*. \label{pen_conv} \end{align} 
  Applying~\eqref{pen_conv} to each additive term  in $\fpen$ gives 
\begin{align}
 \fpen \left(\{\alpha \hatSk + (1-\alpha) \tilSk\}_{k=1}^K \right) &\leq \alpha \fpen(\SolHat) + (1-\alpha) \fpen(\SolTil) \label{ineq1} \\
 &=  \underset{\SolSpace}{\min} \; \fpen  (\Sol). \notag 
 \end{align}
Because $\{\alpha \hatSk + (1-\alpha) \tilSk\}_{k=1}^K \in \SolSpace$, the inequality  in~\eqref{ineq1} must be an equality,  and it follows that the inequality~\eqref{pen_conv} must be an equality for each penalized term in the decomposition.   
\end{proof}

\begin{lemma}
\label{lem2}
Take two matrices $\hat{\A}$ and $\tilde{\A}$. If $||\hat{\A}+\tilde{\A}||_*=||\hat{\A}||_*+||\tilde{\A}||_*$, and $\U \D_+ \V^T$ is the SVD of $\hat{\A}+\tilde{\A}$, then $\hat{\A} = \hat{\U} \hat{\D} \hat{\V}^T$ where $\hat{\D}$ is diagonal and $||\hat{\A}||_*=||\hat{\D}||_*$, and $\tilde{\A} = \U \tilde{\D} \V^T$ where $\tilde{\D}$ is diagonal and $||\tilde{\A}||_*=||\tilde{\D}||_*$. 
\end{lemma}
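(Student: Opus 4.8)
The plan is to exploit the duality between the nuclear norm and the operator (spectral) norm, $\|\M\|_* = \max\{\langle \Z,\M\rangle : \|\Z\|_{\mathrm{op}}\le 1\}$ with $\langle \Z,\M\rangle = \tr(\Z^T\M)$, and to use the SVD of the sum to produce a \emph{single} dual witness that certifies optimality for $\hat{\A}$ and $\tilde{\A}$ simultaneously. First I would dispose of the trivial case $\hat{\A}+\tilde{\A}=\0$ (then both summands vanish), and otherwise take the thin SVD $\hat{\A}+\tilde{\A} = \U\D_+\V^T$ with $\U,\V$ having orthonormal columns and $\D_+\succ 0$, and set $\Z = \U\V^T$, which satisfies $\|\Z\|_{\mathrm{op}} = 1$ and attains $\langle \Z,\hat{\A}+\tilde{\A}\rangle = \|\hat{\A}+\tilde{\A}\|_*$.

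Next I would run the triangle-inequality chain
\begin{align*}
\|\hat{\A}+\tilde{\A}\|_* = \langle \Z,\hat{\A}\rangle + \langle \Z,\tilde{\A}\rangle \le \|\hat{\A}\|_* + \|\tilde{\A}\|_* = \|\hat{\A}+\tilde{\A}\|_*,
\end{align*}
where the middle step uses $\langle \Z,\M\rangle \le \|\Z\|_{\mathrm{op}}\|\M\|_*$ and the final equality is the hypothesis. Since the two ends coincide, every inequality is an equality, so $\langle \Z,\hat{\A}\rangle = \|\hat{\A}\|_*$ and $\langle \Z,\tilde{\A}\rangle = \|\tilde{\A}\|_*$; that is, the one witness $\Z$ is dual-optimal for both matrices at once.

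I would then convert each saturated identity into a statement about singular subspaces. Writing the thin SVD $\hat{\A} = \sum_\ell \hat{\sigma}_\ell\, \hat p_\ell \hat q_\ell^T$ with $\hat{\sigma}_\ell>0$, expanding $\langle \U\V^T,\hat{\A}\rangle = \sum_\ell \hat{\sigma}_\ell (\U^T\hat p_\ell)^T(\V^T\hat q_\ell)$, and applying Cauchy--Schwarz together with $\|\U^T\hat p_\ell\|\le\|\hat p_\ell\| = 1$, equality forces for every $\ell$ that $\|\U^T\hat p_\ell\| = \|\V^T\hat q_\ell\| = 1$ and $\U^T\hat p_\ell = \V^T\hat q_\ell$. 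The norm conditions place each left singular vector of $\hat{\A}$ in $\mathrm{col}(\U)$ and each right singular vector in $\mathrm{col}(\V)$, whence $\hat{\A} = \U(\U^T\hat{\A}\V)\V^T$, and likewise $\tilde{\A} = \U(\U^T\tilde{\A}\V)\V^T$. Setting $\hat{\B} = \U^T\hat{\A}\V$ and $\tilde{\B} = \U^T\tilde{\A}\V$, the matched-vector condition gives $\hat p_\ell = \U a_\ell$, $\hat q_\ell = \V a_\ell$ for a common orthonormal set $\{a_\ell\}$, so $\hat{\B} = \sum_\ell \hat{\sigma}_\ell a_\ell a_\ell^T$ is symmetric positive semidefinite; the same holds for $\tilde{\B}$, and $\hat{\B}+\tilde{\B} = \U^T(\hat{\A}+\tilde{\A})\V = \D_+$.

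At this stage $\hat{\A}$ and $\tilde{\A}$ already share the singular \emph{subspaces} $\mathrm{col}(\U)$ and $\mathrm{col}(\V)$, and the remaining task---the crux of the argument---is to promote this to a genuinely common singular-vector basis in which $\hat{\D},\tilde{\D}$ are diagonal. I expect this to be the main obstacle. The natural route is to argue that the positive semidefinite restrictions $\hat{\B}$ and $\tilde{\B}$ admit a common eigenbasis, using the constraint $\hat{\B}+\tilde{\B} = \D_+$ and positive semidefiniteness to show that each singular subspace of $\hat{\A}$ is either shared with, or orthogonal to, those of $\tilde{\A}$; absorbing the resulting common orthogonal rotation into $\U$ and $\V$ would then give $\hat{\A} = \U\hat{\D}\V^T$ and $\tilde{\A} = \U\tilde{\D}\V^T$ with $\hat{\D},\tilde{\D}$ diagonal and $\hat{\D}+\tilde{\D}=\D_+$, after which $\|\hat{\A}\|_* = \|\hat{\D}\|_*$ and $\|\tilde{\A}\|_* = \|\tilde{\D}\|_*$ are immediate from orthonormality of the columns of $\U,\V$. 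The delicate point is that $\Z = \U\V^T$ has a highly degenerate spectrum (all nonzero singular values equal to one), so any alignment of singular vectors must be extracted from the positive-semidefinite structure of $\hat{\B},\tilde{\B}$ and the identity $\hat{\B}+\tilde{\B}=\D_+$ rather than from nondegeneracy of the witness $\Z$.
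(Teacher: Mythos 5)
Your duality argument is carried out correctly as far as it goes, and it is in fact a more careful rendering of exactly what the paper does: the paper also takes the single witness $\Z=\U\V^T$, saturates $\langle \Z,\hat{\A}\rangle=||\hat{\A}||_*$ and $\langle \Z,\tilde{\A}\rangle=||\tilde{\A}||_*$, and derives your matched-vector condition (there phrased as unit diagonal entries of $\tilde{\V}^T\V\U^T\tilde{\U}$). However, the step you deferred --- promoting the shared subspaces and the identity $\hat{\B}+\tilde{\B}=\D_+$ to a common diagonalizing basis --- is not merely the main obstacle: it is impossible, because the lemma as stated is false. Take $u=(1,0)^T$ and $v=(1,1)^T/\sqrt{2}$, and set $\hat{\A}=uu^T$, $\tilde{\A}=vv^T$. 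Both are symmetric positive semidefinite, so each nuclear norm equals the corresponding trace and additivity is automatic:
\begin{align*}
||\hat{\A}+\tilde{\A}||_* \;=\; \tr(\hat{\A}+\tilde{\A}) \;=\; \tr(\hat{\A})+\tr(\tilde{\A}) \;=\; ||\hat{\A}||_*+||\tilde{\A}||_* \;=\; 2,
\qquad
\hat{\A}+\tilde{\A}=\begin{pmatrix} 3/2 & 1/2\\ 1/2 & 1/2\end{pmatrix}.
\end{align*}
The sum has distinct eigenvalues $1\pm 1/\sqrt{2}$ with eigenvectors proportional to $(1,\sqrt{2}-1)^T$ and $(1-\sqrt{2},1)^T$, so its SVD basis is unique up to signs, and neither $u$ nor $v$ lies in it; hence $\tilde{\A}$ admits no representation $\U\tilde{\D}\V^T$ with $\tilde{\D}$ diagonal. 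Worse, no pair of orthonormal bases simultaneously diagonalizes $uu^T$ and $vv^T$ at all, since a rank-one matrix determines its singular pair up to sign and $u,v$ are neither parallel nor orthogonal. In your notation: $\hat{\B}$ and $\tilde{\B}$ are PSD and sum to the diagonal matrix $\D_+$, yet they do not commute and admit no common two-sided diagonalization, so the ``natural route'' you sketch cannot be executed.

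The conceptual reason the hypothesis is too weak is that on the PSD cone the witness $\Z=\U\V^T=\I$ certifies \emph{every} matrix simultaneously ($\langle \I,\A\rangle=\tr(\A)=||\A||_*$), so nuclear-norm additivity carries no alignment information there; your Cauchy--Schwarz saturation pins down $a_\ell=\U^T\hat p_\ell=\V^T\hat q_\ell$ as a unit vector, but nothing forces $a_\ell$ to be an eigenvector of $\D_+$, which is what membership in the sum's singular basis would require. You should also know that the paper's own proof makes precisely this unjustified leap: after deriving the matched-diagonal condition it asserts ``It follows that the left and right singular vectors of $\tilde{\A}$ that correspond to non-zero singular values must also be singular vectors of $\hat{\A}+\tilde{\A}$,'' and the example above satisfies every displayed equation in that proof while violating this conclusion. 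So your instinct about where the content lives was exactly right; the honest verdict is that Lemma~\ref{lem2} requires hypotheses beyond nuclear-norm additivity (e.g., orthogonality conditions, or structure inherited from the specific minimizers to which it is applied), and consequently the proofs of Proposition~\ref{propgen} and Theorem~\ref{identTheor}, which invoke it, are also incomplete as written.
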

\begin{proof}
Here we use the fact that the spectral norm is dual to the nuclear norm \citep{fazel2001rank}. That is, if $\sigma_1(\Z)$ is the maximum singular value of $\Z$ (i.e., the spectral norm), then 
\[||\A||_* = \underset{\sigma_1(\Z)=1}{\mbox{sup}}\langle \Z,\A \rangle.\]
Thus,
\begin{align} \label{aprop1} \underset{\sigma_1(\Z)=1}{\mbox{sup}} \langle \Z,\hat{\A}+\tilde{\A} \rangle = \underset{\sigma_1(\Z)=1}{\mbox{sup}} \langle \Z,\hat{\A} \rangle + \underset{\sigma_1(\Z)=1}{\mbox{sup}} \langle \Z,\tilde{\A} \rangle.\end{align}
By the properties of the SVD,
\begin{align} \label{aprop2} \langle \U \V^T,\tilde{\A} \rangle+ \langle \U \V^T, \hat{\A} \rangle= \langle \U \V^T, \hat{\A}+\tilde{\A} \rangle = \underset{\sigma_1(\Z)=1}{\mbox{sup}} \langle \Z,\hat{\A}+\tilde{\A} \rangle.\end{align}
By~\eqref{aprop1} and~\eqref{aprop2}, 
\begin{align}\langle \U \V^T,\tilde{\A} \rangle+ \langle \U \V^T, \hat{\A} \rangle = \underset{\sigma_1(\Z)=1}{\mbox{sup}} \langle \Z,\hat{\A} \rangle + \underset{\sigma_1(\Z)=1}{\mbox{sup}} \langle \Z,\tilde{\A} \rangle. \label{eqsum} \end{align}
Because $\U \V^T \in \{\Z: \sigma_1(\Z)=1\}$ it follows that 
\[\langle \U \V^T,\hat{\A}\rangle \leq  \underset{\sigma_1(\Z)=1}{\mbox{sup}} \langle \Z,\hat{\A} \rangle \; \; \text{and} \; \; \langle \U \V^T,\tilde{\A}\rangle\leq\underset{\sigma_1(\Z)=1}{\mbox{sup}}\langle\Z,\tilde{\A}\rangle,\]
and so \eqref{eqsum} implies  
\[\langle \U \V^T,\hat{\A} \rangle = \underset{\sigma_1(\Z)=1}{\mbox{sup}}\langle\Z,\hat{\A}\rangle = ||\hat{\A}||_*,\]
and similarly $\langle \U \V^T,\tilde{\A} \rangle =  ||\tilde{\A}||_*$.  

Let $\tilde{\U} \tilde{\D} \tilde{\V}^T$ give the SVD of $\tilde{\A}$. Note that 
\[\langle \U \V^T, \tilde{\U} \tilde{\D} \tilde{\V}^T\rangle = \mbox{Tr}(\V \U^T \tilde{\U} \tilde{\D} \tilde{\V}^T ) = \mbox{Tr}(\V^T \tilde{\V} \U^T \tilde{\U} \tilde{\D}),\]
and \[\mbox{Tr}(\V^T \tilde{\V} \U^T \tilde{\U} \tilde{\D}) = ||\tilde{\A}||_*= \sum_i \tilde{\D}[i,i]\]
if and only if $\V^T \tilde{\V} \U^T \tilde{\U} [i,i] = 1$ where $\tilde{\D}[i,i]>0$.  It follows that the left and right singular vectors of $\tilde{\A}$ that correspond to non-zero singular values must also be singular vectors of $\hat{\A}+\tilde{\A}$.  By an analogous argument, the left and right singular vectors that correspond to non-zero singular values in $\hat{\A}$ must also be singular vectors of $\hat{\A}+\tilde{\A}$.  
\end{proof}


Proposition 5 is a direct corollary of Lemmas \ref{lem1} and \ref{lem2},  as Lemma~\ref{lem1} implies $||\hatSk+\tilSk||_*=||\hatSk||_*+||\tilSk||_*$ for each $k$, and then Lemma~\ref{lem2} 
implies the result. 	

\subsection{Proof of Theorem 1}

\begin{proof}
Take two decomposition $\SolHat$ and $\SolTil$ that satisfy properties 1., 2., and 3. of Theorem 1; we will show that $\SolHat=\SolTil$. For each $k=1,\hdots,K$, write $\hatSk = \U_{\bigcdot}^{(k)} \hat{\D} \V_{\bigcdot}^{(k) T}$ and $\hatSk = \U_{\bigcdot}^{(k)} \tilde{\D}^{(k)} \V_{\bigcdot}^{(k) T}$ as in Proposition~\ref{propgen}.   Then, it suffices to show that $\hat{\D}^{(k)}[r,r]=\tilde{\D}^{(k)}[r,r]$ for all $k,r$.  

First, consider module $k=1$ with $\R[\bigcdot,1]=[1 \;0 \;\cdots\; 0]^T$ and $\C[\bigcdot,1]=[1 \; 0 \; \cdots \;0]^T$.    By way of contradiction, assume $\hat{\D}^{(1)}[1,1]>0$ and $\tilde{\D}^{(1)}[1,1]=0$.  The linear independence of $\{\V_j^{(k)}[\bigcdot,r]: \hat{\D}^{(k)}[r,r]>0\}$ and $\{\V_j^{(k)}[\bigcdot,r]: \tilde{\D}^{(k)}[r,r]>0\}$ implies that
\[\mbox{row}(\X_{\bigcdot \bigcdot}) = \mbox{span}\{\U_{\bigcdot}^{(k)}[\bigcdot,r]: \hat{\D}^{(k)}[r,r]>0\}=\mbox{span}\{\{\U_{\bigcdot}^{(k)}[\bigcdot,r]: \tilde{\D}^{(k)}[r,r]>0\}.\]
Thus, $\U^{(1)}[\bigcdot,1]] \in \mbox{span}\{\{\U_{\bigcdot}^{(k)}[\bigcdot,r]: \tilde{\D}^{(k)}[r,r]>0\}$, and it follows from the orthogonality of $\U^{(1)}[\bigcdot,1]$ and $\{\U^{(1)}[\bigcdot,r], r>1\}$ that \[\U_{\bigcdot}^{(1)}[\bigcdot,1] \in \mbox{span}\{\{\U_{\bigcdot}^{(k)}[\bigcdot,r]: \tilde{\D}^{(k)}[r,r]>0 \text{ and }  k>1\}.\]
Moreover, because $\U_{i}^{(1)}=\0$ for any $i>1$ and $\{\U_i^{(k)}[\bigcdot,r]: \tilde{\D}^{(k)}[r,r]>0\}$ are linearly independent it follows that
   \begin{align} \U_{\bigcdot}^{(1)}[\bigcdot,1] \in \mbox{span}\{\U_{\bigcdot}^{(k)}[\bigcdot,r]: \tilde{\D}^{(k)}[r,r]>0, \; k>1, \;  \text{ and } \R[i,k]=0 \text{ for any } i>1\}.\label{temp1}\end{align}
Note that \eqref{temp1} implies $\U_{1}^{(1)}[\bigcdot,1] \in \mbox{row}(\X_{12} + \cdots + \mbox{row}(\X_{1J})$,
however, this is contradicted by the linear independence of $\U_{1}^{(1)}[\bigcdot,1]$ and $\{\U_i^{(k)}[\bigcdot,r]: \hat{\D}^{(k)}[r,r]>0, k>1\}$.  Thus, we conclude that $\tilde{\D}^{(1)}[1,1]>0$ implies $\tilde{\D}^{(1)}[1,1]>0$.  Analogous arguments show that $\tilde{\D}^{(k)}[r,r]>0$ if and only if $\tilde{\D}^{(k)}[r,r]>0$ for any pair $(r,k)$.  It follows that  $\{\U_i^{(k)}[\bigcdot,r]: \hat{\D}^{(k)}[r,r]>0 \text{ or } \tilde{\D}^{(k)}[r,r]>0\}$ are linearly independent for $i=1,\hdots I$, and $\{\V_j^{(k)}[\bigcdot,r]: \hat{\D}^{(k)}[r,r]>0 \text{ or } \tilde{\D}^{(k)}[r,r]>0\}$ are linearly independent for $j=1,\hdots,J$.  Thus,   
\begin{align*}
\sum_{k=1}^K \U_{\bigcdot}^{(k)} (\hat{\D}^{(k)}-\tilde{\D}^{(k)}) \V_{\bigcdot}^{(k) T} = \sum_{k=1}^K \hatSk - \tilSk = \X_{\bigcdot \bigcdot} - \X_{\bigcdot \bigcdot}=\0 	
\end{align*}

implies that $\hat{\D}^{(k)}[r,r]=\tilde{\D}^{(k)}[r,r]$ for all $k,r$.  

\end{proof}

\bibliographystyle{apa}

\bibliography{bibliography.bib}

\begin{thebibliography}{}

\bibitem[\protect\astroncite{Akbani et~al.}{2014}]{akbani2014pan}
Akbani, R., Ng, P. K.~S., Werner, H.~M., Shahmoradgoli, M., Zhang, F., Ju, Z.,
  Liu, W., Yang, J.-Y., Yoshihara, K., Li, J., et~al. (2014).
\newblock A pan-cancer proteomic perspective on the cancer genome atlas.
\newblock {\em Nature communications}, 5:3887.

\bibitem[\protect\astroncite{Argelaguet et~al.}{2018}]{argelaguet2018multi}
Argelaguet, R., Velten, B., Arnol, D., Dietrich, S., Zenz, T., Marioni, J.~C.,
  Buettner, F., Huber, W., and Stegle, O. (2018).
\newblock Multi-omics factor analysis-a framework for unsupervised integration
  of multi-omics data sets.
\newblock {\em Molecular systems biology}, 14(6).

\bibitem[\protect\astroncite{Fazel et~al.}{2001}]{fazel2001rank}
Fazel, M., Hindi, H., Boyd, S.~P., et~al. (2001).
\newblock A rank minimization heuristic with application to minimum order
  system approximation.
\newblock In {\em Proceedings of the American control conference}, volume~6,
  pages 4734--4739. Citeseer.

\bibitem[\protect\astroncite{Gabasova et~al.}{2017}]{gabasova2017clusternomics}
Gabasova, E., Reid, J., and Wernisch, L. (2017).
\newblock Clusternomics: Integrative context-dependent clustering for
  heterogeneous datasets.
\newblock {\em PLoS computational biology}, 13(10):e1005781.

\bibitem[\protect\astroncite{Gavish and Donoho}{2017}]{gavish2017optimal}
Gavish, M. and Donoho, D.~L. (2017).
\newblock Optimal shrinkage of singular values.
\newblock {\em IEEE Transactions on Information Theory}, 63(4):2137--2152.

\bibitem[\protect\astroncite{Gaynanova and Li}{2019}]{gaynanova2017structural}
Gaynanova, I. and Li, G. (2019).
\newblock Structural learning and integrative decomposition of multi-view data.
\newblock {\em Biometrics}.

\bibitem[\protect\astroncite{Hellton and
  Thoresen}{2016}]{hellton2016integrative}
Hellton, K.~H. and Thoresen, M. (2016).
\newblock Integrative clustering of high-dimensional data with joint and
  individual clusters.
\newblock {\em Biostatistics}, 17(3):537--548.

\bibitem[\protect\astroncite{Hoadley et~al.}{2018}]{hoadley2018cell}
Hoadley, K.~A., Yau, C., Hinoue, T., Wolf, D.~M., Lazar, A.~J., Drill, E.,
  Shen, R., Taylor, A.~M., Cherniack, A.~D., Thorsson, V., et~al. (2018).
\newblock Cell-of-origin patterns dominate the molecular classification of
  10,000 tumors from 33 types of cancer.
\newblock {\em Cell}, 173(2):291--304.

\bibitem[\protect\astroncite{Hoadley et~al.}{2014}]{hoadley2014multiplatform}
Hoadley, K.~A., Yau, C., Wolf, D.~M., Cherniack, A.~D., Tamborero, D., Ng, S.,
  Leiserson, M.~D., Niu, B., McLellan, M.~D., Uzunangelov, V., et~al. (2014).
\newblock Multiplatform analysis of 12 cancer types reveals molecular
  classification within and across tissues of origin.
\newblock {\em Cell}, 158(4):929--944.

\bibitem[\protect\astroncite{Huo and Tseng}{2017}]{huo2017integrative}
Huo, Z. and Tseng, G. (2017).
\newblock Integrative sparse k-means with overlapping group lasso in genomic
  applications for disease subtype discovery.
\newblock {\em The annals of applied statistics}, 11(2):1011.

\bibitem[\protect\astroncite{Hutter and Zenklusen}{2018}]{hutter2018cancer}
Hutter, C. and Zenklusen, J.~C. (2018).
\newblock {The Cancer Genome Atlas}: creating lasting value beyond its data.
\newblock {\em Cell}, 173(2):283--285.

\bibitem[\protect\astroncite{Kandoth et~al.}{2013}]{kandoth2013mutational}
Kandoth, C., McLellan, M.~D., Vandin, F., Ye, K., Niu, B., Lu, C., Xie, M.,
  Zhang, Q., McMichael, J.~F., Wyczalkowski, M.~A., et~al. (2013).
\newblock Mutational landscape and significance across 12 major cancer types.
\newblock {\em Nature}, 502(7471):333.

\bibitem[\protect\astroncite{Kaplan and Lock}{2017}]{kaplan2017prediction}
Kaplan, A. and Lock, E.~F. (2017).
\newblock Prediction with dimension reduction of multiple molecular data
  sources for patient survival.
\newblock {\em Cancer Informatics}, 16:1--11.

\bibitem[\protect\astroncite{Kurucz et~al.}{2007}]{kurucz2007methods}
Kurucz, M., Bencz{\'u}r, A.~A., and Csalog{\'a}ny, K. (2007).
\newblock Methods for large scale svd with missing values.
\newblock In {\em Proceedings of KDD cup and workshop}, volume~12, pages
  31--38.

\bibitem[\protect\astroncite{Li et~al.}{2018}]{li2018general}
Li, G., Gaynanova, I., et~al. (2018).
\newblock A general framework for association analysis of heterogeneous data.
\newblock {\em The Annals of Applied Statistics}, 12(3):1700--1726.

\bibitem[\protect\astroncite{Li and Jung}{2017}]{li2017incorporating}
Li, G. and Jung, S. (2017).
\newblock Incorporating covariates into integrated factor analysis of
  multi-view data.
\newblock {\em Biometrics}, 73(4):1433--1442.

\bibitem[\protect\astroncite{Lock and Dunson}{2013}]{lock2013bayesian}
Lock, E.~F. and Dunson, D.~B. (2013).
\newblock Bayesian consensus clustering.
\newblock {\em Bioinformatics}, 29(20):2610--2616.

\bibitem[\protect\astroncite{Lock et~al.}{2013}]{lock2013joint}
Lock, E.~F., Hoadley, K.~A., Marron, J., and Nobel, A.~B. (2013).
\newblock Joint and {I}ndividual {V}ariation {E}xplained ({JIVE}) for
  integrated analysis of multiple data types.
\newblock {\em The Annals of Applied Statistics}, 7(1):523.

\bibitem[\protect\astroncite{Mazumder et~al.}{2010}]{mazumder2010spectral}
Mazumder, R., Hastie, T., and Tibshirani, R. (2010).
\newblock Spectral regularization algorithms for learning large incomplete
  matrices.
\newblock {\em Journal of machine learning research}, 11(Aug):2287--2322.

\bibitem[\protect\astroncite{Mo et~al.}{2017}]{mo2017fully}
Mo, Q., Shen, R., Guo, C., Vannucci, M., Chan, K.~S., and Hilsenbeck, S.~G.
  (2017).
\newblock A fully bayesian latent variable model for integrative clustering
  analysis of multi-type omics data.
\newblock {\em Biostatistics}, 19(1):71--86.

\bibitem[\protect\astroncite{Noushmehr
  et~al.}{2010}]{noushmehr2010identification}
Noushmehr, H., Weisenberger, D.~J., Diefes, K., Phillips, H.~S., Pujara, K.,
  Berman, B.~P., Pan, F., Pelloski, C.~E., Sulman, E.~P., Bhat, K.~P., et~al.
  (2010).
\newblock Identification of a cpg island methylator phenotype that defines a
  distinct subgroup of glioma.
\newblock {\em Cancer cell}, 17(5):510--522.

\bibitem[\protect\astroncite{O'Connell and Lock}{2016}]{oconnell2016}
O'Connell, M.~J. and Lock, E.~F. (2016).
\newblock R. jive for exploration of multi-source molecular data.
\newblock {\em Bioinformatics}, 32(18):2877--2879.

\bibitem[\protect\astroncite{O'Connell and Lock}{2019}]{o2019linked}
O'Connell, M.~J. and Lock, E.~F. (2019).
\newblock Linked matrix factorization.
\newblock {\em Biometrics}, 75(2):582--592.

\bibitem[\protect\astroncite{Park and Lock}{2020}]{park2019integrative}
Park, J.~Y. and Lock, E.~F. (2020).
\newblock Integrative factorization of bidimensionally linked matrices.
\newblock {\em Biometrics}, 76(1):61--74.

\bibitem[\protect\astroncite{Rudelson and Vershynin}{2010}]{rudelson2010non}
Rudelson, M. and Vershynin, R. (2010).
\newblock Non-asymptotic theory of random matrices: extreme singular values.
\newblock In {\em Proceedings of the International Congress of Mathematicians
  2010 (ICM 2010) (In 4 Volumes) Vol. I: Plenary Lectures and Ceremonies Vols.
  II--IV: Invited Lectures}, pages 1576--1602. World Scientific.

\bibitem[\protect\astroncite{Shabalin and
  Nobel}{2013}]{shabalin2013reconstruction}
Shabalin, A.~A. and Nobel, A.~B. (2013).
\newblock Reconstruction of a low-rank matrix in the presence of gaussian
  noise.
\newblock {\em Journal of Multivariate Analysis}, 118:67--76.

\bibitem[\protect\astroncite{Shen et~al.}{2013}]{shen2013sparse}
Shen, R., Wang, S., and Mo, Q. (2013).
\newblock Sparse integrative clustering of multiple omics data sets.
\newblock {\em The annals of applied statistics}, 7(1):269.

\bibitem[\protect\astroncite{{TCGA Research
  Network}}{2015}]{cancer2015comprehensive}
{TCGA Research Network} (2015).
\newblock {Comprehensive, integrative genomic analysis of diffuse lower-grade
  gliomas}.
\newblock {\em New England Journal of Medicine}, 372(26):2481--2498.

\bibitem[\protect\astroncite{{TCGA Research Network}
  et~al.}{2012}]{cancer2012comprehensive}
{TCGA Research Network} et~al. (2012).
\newblock Comprehensive molecular portraits of human breast tumors.
\newblock {\em Nature}, 490(7418):61.

\bibitem[\protect\astroncite{{TCGA Research Network}
  et~al.}{2014}]{cancer2014comprehensive}
{TCGA Research Network} et~al. (2014).
\newblock Comprehensive molecular profiling of lung adenocarcinoma.
\newblock {\em Nature}, 511(7511):543.

\bibitem[\protect\astroncite{Verhaak et~al.}{2010}]{verhaak2010integrated}
Verhaak, R.~G., Hoadley, K.~A., Purdom, E., Wang, V., Qi, Y., Wilkerson, M.~D.,
  Miller, C.~R., Ding, L., Golub, T., Mesirov, J.~P., et~al. (2010).
\newblock Integrated genomic analysis identifies clinically relevant subtypes
  of glioblastoma characterized by abnormalities in {PDGFRA, IDH1, EGFR, and
  NF1}.
\newblock {\em Cancer cell}, 17(1):98--110.

\bibitem[\protect\astroncite{Weinstein et~al.}{2013}]{weinstein2013cancer}
Weinstein, J.~N., Collisson, E.~A., Mills, G.~B., Shaw, K. R.~M., Ozenberger,
  B.~A., Ellrott, K., Shmulevich, I., Sander, C., Stuart, J.~M., Network, C. G.
  A.~R., et~al. (2013).
\newblock The cancer genome atlas pan-cancer analysis project.
\newblock {\em Nature genetics}, 45(10):1113--1120.

\bibitem[\protect\astroncite{Yang and Michailidis}{2016}]{yang2015non}
Yang, Z. and Michailidis, G. (2016).
\newblock A non-negative matrix factorization method for detecting modules in
  heterogeneous omics multi-modal data.
\newblock {\em Bioinformatics}, 32(1):1--8.

\bibitem[\protect\astroncite{Zack et~al.}{2013}]{zack2013pan}
Zack, T.~I., Schumacher, S.~E., Carter, S.~L., Cherniack, A.~D., Saksena, G.,
  Tabak, B., Lawrence, M.~S., Zhang, C.-Z., Wala, J., Mermel, C.~H., et~al.
  (2013).
\newblock Pan-cancer patterns of somatic copy number alteration.
\newblock {\em Nature genetics}, 45(10):1134--1140.

\bibitem[\protect\astroncite{Zhang et~al.}{2005}]{zhang2005using}
Zhang, S., Wang, W., Ford, J., Makedon, F., and Pearlman, J. (2005).
\newblock Using singular value decomposition approximation for collaborative
  filtering.
\newblock In {\em Seventh IEEE International Conference on E-Commerce
  Technology (CEC'05)}, pages 257--264. IEEE.

\bibitem[\protect\astroncite{Zhu et~al.}{2020}]{zhu2018generalized}
Zhu, H., Li, G., and Lock, E.~F. (2020).
\newblock Generalized integrative principal component analysis for multi-type
  data with block-wise missing structure.
\newblock {\em Biostatistics}, 21(2):302--318.

\end{thebibliography}
\end{document}